%% file: ex_article.tex
\documentclass[final,hidelinks,onefignum,onetabnum]{siamart251216}
\usepackage[utf8]{inputenc}
\input{ex_shared}

\ifpdf
\hypersetup{
  pdftitle={SD-FSNN},
  pdfauthor={Z.Y. Liang}
}
\fi

\begin{document}

\maketitle

% REQUIRED
\begin{abstract}
This paper introduces the Stochastic-Dimension Frozen Sampled Neural Network (SD-FSNN), a novel computational framework for solving the high-dimensional Gross-Pitaevskii equation (GPE) on unbounded domains. The proposed method circumvents the curse-of-dimensionality that plagues traditional discretizations and the computational bottlenecks of gradient-based neural network solvers through a synergistic combination of techniques. First, a prescribed Gaussian envelope encodes the far-field decay of the wavefunction, enabling a space-time separation where the spatial approximation is handled by a frozen, single-hidden-layer neural network with data-driven sampled features. This yields a gradient-free formalism where spatial derivatives are analytically precomputed and time-dependence is evolved via reduced ODEs. Second, a stochastic-dimension sampler provides a conditionally unbiased estimate of the spatial operator by evaluating only a small subset of spatial dimensions at each time step, essentially reducing computational and memory costs. Discrete conservation laws are also enforced, ensuring long-term stability. Extensive numerical experiments on GPE in up to 1000 dimensions demonstrate that SD-FSNN achieves significantly higher accuracy and efficiency compared to state-of-the-art methods, including PINNs, randomized feature methods, and tensor-network approaches. The results confirm that SD-FSNN effectively mitigates the Kolmogorov $n$-width barrier for frozen-basis models on structured solution manifolds.
%We propose a stochastic-dimension frozen sampled neural network (SD-FSNN) for solving the Gross--Pitaevskii equation (GPE) on unbounded domains.
%The stochastic-dimension sampler is conditionally unbiased for the additive spatial operator at fixed collocation points, frozen features, and coefficient state. Its dominant operator-evaluation cost scales with the dimension-batch size rather than the full ambient dimension.
%The hidden weights and biases are sampled and then frozen, avoiding iterative gradient-based optimization and thereby reducing training cost.
%A space-time separation strategy combined with adaptive ordinary differential equation (ODE) solvers updates the time-dependent coefficients while preserving temporal causality.
%To respect the GPE structure, we incorporate a Gaussian-weighted ansatz for exponential decay at infinity, together with a closed-form two-step projection that exactly preserves the discrete mass and the quadratic part of the discrete Hamiltonian, while the quartic interaction term is monitored.
%For the problems considered, SD-FSNN reduces the dominant operator-evaluation cost from linear in $d$ to linear in the dimension-batch size.
%Numerical experiments show that SD-FSNN achieves lower errors and shorter runtimes than the tested baselines. The benchmarks span a range of spatial dimensions and interaction parameters.
\end{abstract}

% REQUIRED
\begin{keywords}
High dimension,
Gross--Pitaevskii equation,
unbounded domain,
gradient-free method,
space-time separation, 
structure preservation.
\end{keywords}

% REQUIRED
\begin{AMS}
  65M20, 35Q55, 68T01
\end{AMS}

\section{Introduction}
\label{sec:intro}

The Gross--Pitaevskii equation (GPE), a  nonlinear Schr\"odinger equation, is of paramount importance in mathematics and physics.
Its dimensionless form in $d$ dimensions ($d\in \mathbb{N}_+$) reads \cite{GPEreview,erdHos2010derivation,lieb2005math}
\begin{equation}\label{eqn:GPE}
\left\{
\begin{aligned}
& i\,\partial_{t}\psi(\bm{x},t)=-\frac{1}{2}\nabla^2\psi(\bm{x},t)+V_d(\bm{x})\psi(\bm{x},t)+\beta\,|\psi(\bm{x},t)|^{2}\psi(\bm{x},t),\quad \bm{x}\in\mathbb{R}^{d},\ t>0,\\[2mm]
& \psi(\bm{x},0)=\psi_0(\bm{x}),\quad
\bm{x}\in\mathbb{R}^{d},
\end{aligned}
\right.
\end{equation}
where $\psi$ is the complex-valued unknown wave function with the initial data $\psi_0$, 
$\mathbf{x}=(x_1,\ldots,x_d)$, 
$
  V_d(\bm{x}) = \frac{1}{2}\sum_{j=1}^{d} \gamma_j^2 x_j^2$ is a given potential %and $
 % \beta_d = \beta \prod_{j=d+1}^{n} \sqrt{\frac{\gamma_j}{2\pi}}$
with $\gamma_j > 0$ the trapping frequencies,  and $\beta$ denotes the strength of nonlinear self-interaction. 
For $d\leq3$, GPE \eqref{eqn:GPE} has been justified \cite{erdHos2010derivation,lieb2005math} as a model with high fidelity to describe the Bose--Einstein condensates in the low-temperature regime \cite{anglin2002bose}. Mathematically, it is of interest in arbitrary dimensions, whose well-posedness has been established \cite{Cazenave}. Particularly, the defocusing case ($\beta>0$) of GPE \eqref{eqn:GPE} is globally well-posed.  
It has also served as a benchmark for emerging quantum simulation algorithms \cite{georgescu2014quantum}. 
The dynamics of \eqref{eqn:GPE} conserves in time the \textit{mass} (normalized for simplicity):
\begin{equation}
\label{eqn:norm}
N(\psi):=\|\psi(\cdot,t)\|^2=\int_{{\mathbb{R}}^d} \; |\psi(\bm{x},t)|^2\;d\bm{x}
\equiv \int_{{\mathbb{R}}^d} \; |\psi_0(\bm{x})|^2\;d\bm{x}=1,\quad \forall t\geq0,
\end{equation}
and the \textit{energy/Hamiltonian}:
\begin{equation}\label{eqn:energy}
E(\psi) := \int_{\mathbb{R}^d} \left[\frac{1}{2}|\nabla\psi(\bm{x},t)|^2 + V_d(\bm{x})|\psi(\bm{x},t)|^2 + \frac{\beta}{2}|\psi(\bm{x},t)|^4\right] d\bm{x}\equiv E(\psi_0), \quad \forall\, t \geq 0.
\end{equation}

Extensive numerical methods have been proposed to solve GPE \eqref{eqn:GPE} in the literature. 
Due to fast decay of the solution at far field \cite{lieb2005math}, 
a large class of traditional discretization methods was designed 
on truncated domains with hard (zero or periodic) boundary conditions imposed 
\cite{GPEreview,fem,DG}. Their validity, however, is restricted by the chosen domain size, where spurious reflections would occur as waves reach the boundary. Such limitations motivate on the one hand the design of artificial boundary conditions \cite{PMLreview,xu2006absorb,zhang2008unif} for the truncated domain, while their accuracy is often sensitive to some chosen parameters and/or limited to the linear case. On the other hand, spectral methods that naturally account for the decay at infinity, e.g., Hermite or Laguerre basis \cite{spectralbook}, enable accurate computations directly in whole space \cite{bao2005fourth,chou2023adaptive,Yu,ShenFOCM}, though 
those whole-space basis lack a natural fast transform like FFT \cite{spectralbook}.
%These adaptive schemes incorporate scaling adjustments and $p$-adaptivity, improving spectral resolution across varying decay rates and solution behaviors \cite{chou2023adaptive}.
Despite continuing developments, 
traditional methods all suffer from curse-of-dimensionality (CoD), where computational and memory costs grow exponentially with dimension $d$. 
%The number of tensor-product basis functions  grows exponentially with the ambient dimension $d$, leading to prohibitive computational and memory costs. 
%As a result, traditional spectral methods become impractical for high-dimensional GPEs arising in contemporary experimental contexts.
%Relevant examples include dipolar condensates \cite{lahaye2009physics} and quantum simulation platforms with tens of spatial modes \cite{wang2017exp}. %On a tensor product grid with $N+1$ points per dimension, storing the solution requires $O(N^d)$ memory.
%The dominant per-step cost comes from forward and inverse transforms between physical and spectral spaces, which scale as $O(dN^{d+1})$ \cite{bao2005fourth}.
%Thus, for fixed resolution per dimension, both memory and computational cost grow exponentially with the spatial dimension $d$.

To mitigate CoD, 
 deep-learning-based solvers for high-dimensional PDEs have been developed along three main directions, depending on how differential operators are handled. 
The first direction solves PDEs using their stochastic representations, e.g., DeepBSDE \cite{beck2019machine,han2018hjb}, deep splitting \cite{beck2021deep}, FBSNNs \cite{raissi2024forward}, multilevel Picard schemes \cite{beck2024overcome}, TNN \cite{wang2025tensor,wang2022tensor} and separable PINNs \cite{cho2022spinn}. 
The second direction reduces operator evaluation cost via randomized estimators, e.g., SDGD \cite{hu2024tackling}, HTE \cite{hu2024hte}, RS-PINNs \cite{hu2025smooth}, Score-PINNs \cite{hu2025score} and STDEs \cite{shi2024stde}. 
The third direction avoids backpropagation and uses random features or analytic fitting, represented by RFM \cite{rahimi2007random}, LocalELM \cite{huang2012extreme} and GNN \cite{ainsworth2021gnn,ainsworth2025egnn}. 
Related methods that reduce reliance on full backpropagation include CAN-PINNs \cite{chiu2022can}, HFD-PINNs \cite{lv2023hybrid}, fPINNs \cite{pang2019fpinns} and the Deep Galerkin method \cite{sirignano2018dgm}. 
Despite these advances, solving high-dimensional GPE on an unbounded domain has not been addressed before and remains challenging under state-of-the-art methods, which shall be seen in our numerical experiments. 
Randomized gradient estimators may introduce bias through nonlinear losses or suffer from dimension-dependent variance amplification, causing spurious dissipation. 
Gradient-based training can be slow, unstable, and sensitive to hyperparameters. 
Moreover, many standard neural ansatz lacks built-in spatial localization, making it difficult to learn without artificial boundaries, and 
global-in-time training or weak causal control may further amplify errors. 
%Mass and energy conservation in high-dimensional settings has received limited attention.%, and violations can lead to numerical dissipation and loss of physical structure \cite{liang2026stochastic}.

In this work, we shall develop a \textbf{S}tochastic-\textbf{D}imension \textbf{F}rozen \textbf{S}ampled \textbf{N}eural \textbf{N}etwork  (\textbf{SD-FSNN}) for solving high-dimensional GPE on unbounded domain, addressing the aforementioned challenges.
Our main contributions include
\begin{itemize}
    \item \textbf{De-enveloped network and featured basis.}
    We construct a novel network with a prescribed exponential-decay envelope and sample the featured basis for spatial approximation of the de-enveloped part. This leads to a gradient-free method for GPE in whole space at high accuracy and  alleviates the Kolmogorov n-width barrier \cite{peherstorfer2022breaking}. 

    \item \textbf{Space-time separation and stochastic-dimension sampler.}
    The proposed network gives rise
    to a space-time separation, where time-dependence is computed on reduced ODEs. 
    The spatial part involves only  differentiations in additive manner, and we propose a sampler with respect to spatial dimensions for evaluations per time level. This reduces essentially the computational and memory costs with theoretically proved 
    conditional unbiasedness of the stochastic-dimension sampler.
    %This enforces temporal causality by construction, avoids recomputing high-dimensional spatial features during time stepping, and facilitates long-time prediction.

    %\item \textbf{Stochastic-dimension sampler.}
    %Thanks to the additive differentiations resulting from our space-time separation, 

 %   \item \textbf{Gradient-free randomized representation.}
  %  We employ adaptively sampled weights to obtain a randomized representation that avoids gradient-based optimization.
   % This choice alleviates the slow convergence and limited accuracy often observed in gradient-based solvers for high-dimensional PDEs.

    \item \textbf{Structure-preserving constraints.}
In addition to the enforced exponential decay at infinity, 
    we also embed a closed-form two-step projection on the evolving temporal coefficients, which simultaneously preserves the mass and energy at the discrete level. This further enhances long-term performance with uniform-in-time boundedeness of the dynamical coefficients proved. 

   % \item \textbf{Validation across dimensions and interaction.}
    
\end{itemize}
Through extensive numerical experiments, we evaluate the effectiveness and robustness of SD-FSNN on GPE across a range of spatial dimensions.
    Numerical results demonstrate that SD-FSNN compares favorably with several representative high-dimensional PDE solvers on the benchmarks considered.

The rest of the paper is organized as follows. 
\Cref{sec:problem} provides preliminaries regarding computations of high-dimensional PDEs. 
\Cref{sec:sdfsnn} presents SD-FSNN and its analysis. The efficiency and accuracy of SD-FSNN is demonstrated numerically in \Cref{sec:exp}, and comparisons with existing approaches are made in
\Cref{sec:discussion}. \Cref{sec:conclusion} draws concluding remarks.

\section{Preliminaries}
\label{sec:problem}
This section provides some pre-knowledge related to trainings of high-dimensional PDEs and a brief review of existing works proposed from two aspects, which will be compared numerically. 
%\subsection{Problem formulation}

%The energy is conserved during time evolution,
%\begin{equation}\label{eqn:energy_cons}
%E(\psi(\cdot,t)) \equiv E(\psi_0), \quad \forall\, t \geq 0.
%\end{equation}
%This conservation law follows from the Hamiltonian structure of the GPEs associated with \cref{eqn:energy} under the constraint \cref{eqn:norm}.
% The energy consists of three terms: the kinetic energy ($\frac{1}{2}|\nabla\psi|^2$), the potential energy ($V_d|\psi|^2$), and the interaction energy ($\frac{\beta_d}{2}|\psi|^4$).

%Since the equation is posed on the whole space, Hermite basis functions are natural and have been successfully used for related equations \cite{guo2003spec}.
%On a tensor product grid with $N+1$ points per dimension, storing the solution requires $O(N^d)$ memory.
%The dominant per-step cost comes from forward and inverse transforms between physical and spectral spaces, which scale as $O(dN^{d+1})$ \cite{bao2005fourth}.
%Thus, for fixed resolution per dimension, both memory and computational cost grow exponentially with the spatial dimension $d$.

\subsection{Randomized gradient estimators}
For trainings of high-dimensional or high-order PDEs via gradient-based optimizers, 
automatic differentiation (AD) becomes expensive, since forward-mode AD scales with $d$ and nested backward-mode AD rapidly enlarges computational graphs. To reduce the burden, different
randomized gradient estimators by sampling coordinates, directions, or perturbations have been proposed in the literature:
\begin{itemize}
    \item \textbf{Stochastic dimension gradient descent (SDGD).}
    SDGD \cite{hu2024tackling} approximates a summation of partial derivatives by some rescaled one over a random subset, reducing explicit $d$-dependence but still using AD for the sampled.

    \item \textbf{Randomized smoothing PINNs (RS-PINNs).}
    RS-PINNs \cite{hu2025smooth} replace high-order derivatives with Gaussian-smoothed function evaluations, which avoids high-order AD but may introduce bias from nonlinear residuals. Debiasing can increase variance.

    \item \textbf{Hutchinson trace estimation (HTE).}
    HTE \cite{hu2024hte} estimates Laplacian % $\Delta u = \operatorname{tr}(\nabla^2 u)$ 
via some random Hessian-vector products, %$v^\top \nabla^2 u\, v$.
    avoiding the full Hessian. %, but is not naturally suited to general higher-order operators.

    \item \textbf{Stochastic Taylor derivative estimator (STDE).}
    STDE \cite{shi2024stde} estimates differential operators via random sparse Taylor-mode jets with debiasing scalars, which still requires Taylor-mode AD and gradient-based training.

    \item \textbf{Forward Laplacian (ForLap).}
    ForLap \cite{li2024forlap} streamlines AD-based Laplacian evaluation with $\mathcal{O}(d)$ complexity.
\end{itemize}
Although these methods are helpful in reducing derivative-evaluation costs, they still essentially rely on AD, trainable networks, or iterative optimization. 
This motivates us to consider a gradient-free formulation. % with analytical feature derivatives and stochastic-dimension spatial-operator samplers.

% \subsubsection{Gradient-free solvers}
\subsection{Randomized neural networks}

Randomized neural networks (RaNNs) \cite{gallicchio2020rann,huybrechs2024rann} reduce training cost by freezing hidden features and fitting only output coefficients. Representative works include RFM \cite{rahimi2007random} and LocalELM \cite{huang2012extreme}, briefly presented in the following. 

Given random features $\varphi_1, \ldots, \varphi_N$ according to Gaussian distribution on some domain of $\mathbb{R}^d$, RaNNs construct the approximation as
\begin{equation}\label{eqn:rann}
    U_W: \mathbb{R}^d \to \mathbb{R}: \mathbf{x}\mapsto \sum_{j=1}^N w_j \varphi_j(\mathbf{x};\theta_j),
\end{equation}
where only $w_j$ are fitted and $\theta_j$ are typically the sample parameters. 
RFM specializes \cref{eqn:rann} to a single hidden layer with frozen parameters,
\begin{equation}\label{eqn:rfm}
    U_W^{A,B}: \mathbb{R}^d \to \mathbb{R}: x\mapsto \sum_{j=1}^N w_j \sigma(A_j\cdot x+B_j),
\end{equation}
where $\sigma$ is nonlinear and $A_i,B_i$ are random weights and biases. 
Given the representation \cref{eqn:rfm}, LocalELM determines the weights $w_j$ analytically from the given data $\{(x_k,T_k)\}_{k=1}^n$ by imposing
$\sum_{j=1}^N w_j \sigma(A_j\cdot x_k +B_j) = T_k, \  1\leq k\leq n$,
where the matrix form $Hw=T$ with $T=(T_k)_{n\times1},\,w=(w_j)_{N\times1},\,H=(H_{kj})_{n\times N},\,H_{kj}=\sigma(A_j\cdot x_k+B_j)$ gives the minimum-norm solution $w=H^\dagger T$. Here and after, $(\cdot)^\dagger$ denotes the pseudo-inverse.

Although efficient, RFM and LocalELM lack causal time-marching and may accumulate long-term errors.

\section{Stochastic-dimension frozen sampled neural network (SD-FSNN)}
\label{sec:sdfsnn}

In this section, we introduce our SD-FSNN method for high-dimensional GPE \eqref{eqn:GPE} on unbounded domain.
As illustrated in \Cref{fig:sdfsnn_core_idea}, SD-FSNN consists of two main components:
i) A fundamental network architecture that 
constructs tailored frozen spatial basis with a decaying envelope and evolves time-dependent coefficients through ODEs, for high accuracy and efficiency; 
ii) An add-on unbiased sampling of spatial dimension to evaluate differential operators, further reducing the high-dimensional operator-evaluation cost.
%ii) frozen spatial bases are constructed by pairwise sampled initialization and multiplied by an exponential-decay envelope to encode whole-space boundary behavior;
%iii) space and time are separated by freezing the hidden weights and evolving only the time-dependent output coefficients through an ODE system;
%iv) the spatiotemporal approximation $\hat{\psi}(\bm{x},t)$ is reconstructed from the evolving coefficients and the frozen spatial bases at the prescribed evaluation times.

\begin{figure}[htbp]
    \centering
    \includegraphics[width=\textwidth]{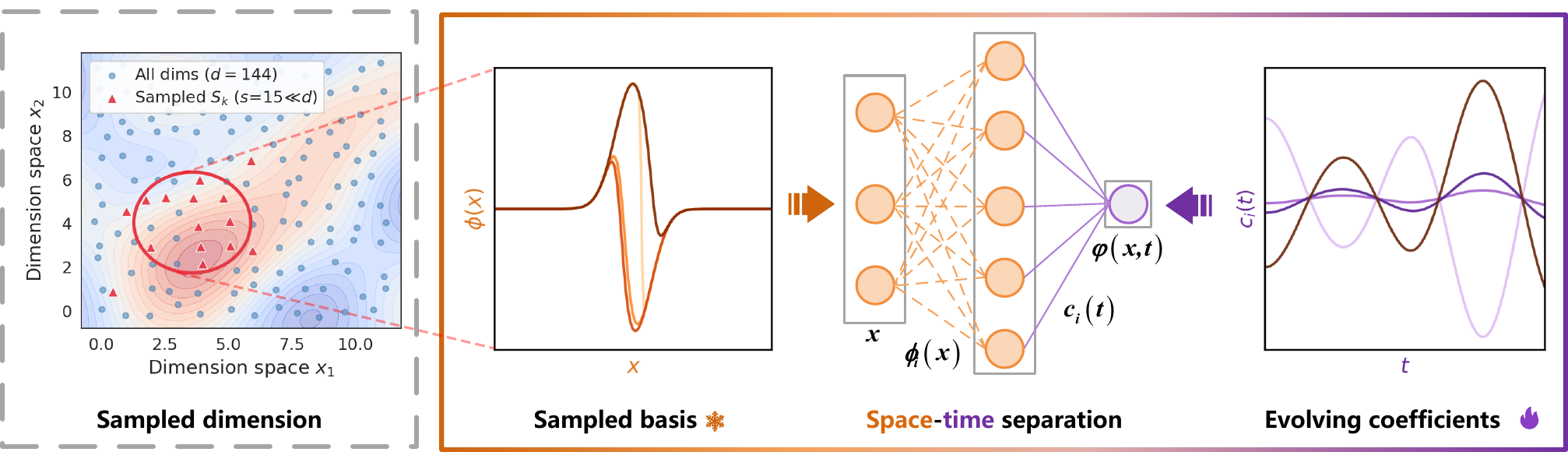}
    \vspace{-1.5em}
    \caption{\textbf{Core ideas of SD-FSNN}: SD-FSNN avoids backpropagation through space-time separation and random spatial sampling. Frozen basis functions with exponential decay encode far-field behavior. Differential operators are evaluated on a sampled subset of dimensions. Time-dependent coefficients are evolved on ODEs.}
    \label{fig:sdfsnn_core_idea}
\end{figure}

\subsection{Fundamental architecture}
We first present the fundamental architecture of SD-FSNN for separated space-time evaluations. 
\subsubsection{Network ansatz and featured basis} 
To encode far-field decay, we factor the wave function as
\begin{equation}\label{ansatz}
\psi(\bm{x},t)=\rho(\bm{x})u(\bm{x},t),
\end{equation}
where $\rho:\mathbb{R}^d\to(0,\infty)$ is a prescribed normalized decaying envelope $\rho(\bm{x})=(\alpha/\pi)^{d/2}\cdot\exp(-\alpha\|\bm{x}\|^2)$ with $\alpha>0$, so that $\int_{\mathbb{R}^d}\rho(\bm{x})\,d\bm{x}=1$. 
The prefactor enforces normalization, while $\alpha$ controls the decay rate which we shall infer from the initial data. 
% For the isotropic Gaussian state $\psi_0(\bm{x})=\pi^{-d/4}e^{-\|\bm{x}\|^2/2}$, one sets $\alpha=0.5$. 
This provides small values and so small errors at the far field.
% where $\rho:\mathbb{R}^d\to(0,\infty)$ is a prescribed decaying envelope $\rho(\bm{x})=\exp(-\alpha\|\bm{x}\|^2)$ with $\alpha>0$ chosen for normalization  $\int_{\mathbb{R}^d}\rho(\bm{x})d\bm{x}=1$. This provides small values and so small errors at the far field. 
The de-enveloped variable is $u=\rho^{-1}\psi$ with initial data $u_0=\rho^{-1}\psi_0$, and the neural network needs only to approximate this part where
%Throughout the paper, the spatial variable $\bm{x}\in\mathbb{R}^{1\times d}$ is treated as a \emph{row} vector, so that $\bm{x}^\top\in\mathbb{R}^{d\times 1}$ is the corresponding column vector.
we use a single-hidden-layer frozen feature approximation:
\begin{equation}\label{eqn:ansatz}
   \hat{u}(\bm{x},t) = {c}(t)\Phi(\bm{x})^\top + c_0(t),\quad \Phi(\bm{x}) = [\phi_1(\bm{x}), \dots, \phi_M(\bm{x})]\in\mathbb{R}^{1\times M},\quad M\in\mathbb{N}_+.
\end{equation}
Here, ${c}(t)\in\mathbb{C}^{1 \times M},\,c_0(t)\in\mathbb{C}$ are evolving coefficients and each feature is taken as $\phi_m(\bm{x})=\tanh(w_m\bm{x}^\top+b_m)$  with a frozen weight vector $w_m\in\mathbb{R}^{1\times d}$ and a frozen bias $b_m\in\mathbb{R}$ to be chosen. %, so that $w_m\bm{x}^\top\in\mathbb{R}$.
Let $W\in\mathbb{R}^{M\times d}$ be the row-stack of $w_1,\ldots,w_M$, $b=[b_1,\ldots,b_M]^\top\in\mathbb{R}^{M\times 1}$, $C(t)=[c(t),c_{0}(t)]\in\mathbb{C}^{1\times(M+1)}$ and $\Psi(\bm{x}):=[\Phi(\bm{x}),1]^\top\in\mathbb{R}^{(M+1)\times 1}$, then \eqref{eqn:ansatz} in matrix form reads $\hat{u}(\bm{x},t)=C(t)\Psi(\bm{x})$
and the solution of \eqref{eqn:GPE} is  approximated by $\hat{\psi}(\bm{x},t)=\rho(\bm{x})\hat{u}(\bm{x},t)$ with time dependence through $C(t)$.

\begin{figure}[h!]
    \centering
    \includegraphics[width=\textwidth]{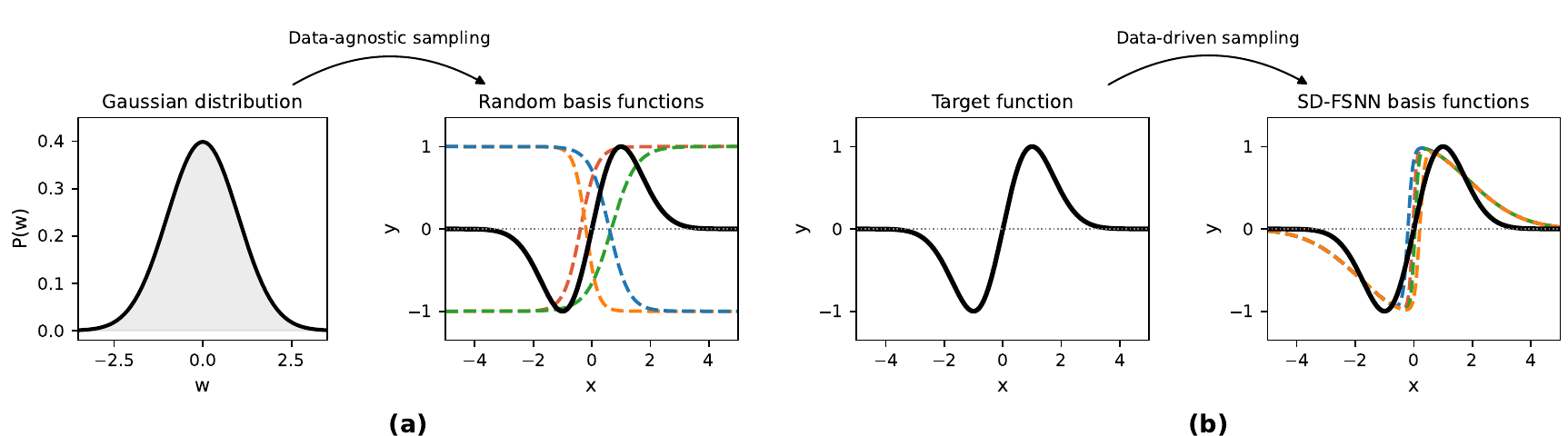}
    \vspace{-1.5em}
    \caption{Sampling strategy illustration: (a) Data-agnostic sampling; (b) Data-driven sampling.}
    \label{fig:sdfsnn-basis}
\end{figure}
For values of $w_m$ and $b_m$,
%Let $p_\rho(\bm{x})=Z_\rho^{-1}\rho(\bm{x})^2$ be the normalized envelope-induced density, with normalization constant $Z_\rho=\int_{\mathbb{R}^d}\rho(\bm{x})^2\,d\bm{x}<\infty$. 
following \cite{bolager2023sampling}, we sample two points $\bm{x}_{m,1}, \bm{x}_{m,2} \in \mathbb{R}^d$ for each $1\leq m\leq M$, and set
\begin{align}\label{eqn:sample_weight}
     w_m = s_1 \frac{\bm{x}_{m,2} - \bm{x}_{m,1}}{\lVert \bm{x}_{m,2} - \bm{x}_{m,1}\rVert^{2}}, \quad b_m = - 
     w_m\bm{x}_{m,1}^\top
     + s_2,
\end{align}
where %$\langle\cdot,\cdot\rangle$ denotes the Euclidean inner product on $\mathbb{R}^d$, and
the constants $s_1,s_2\in\mathbb{R}$ are chosen so that the feature interpolates between two pre-specified activation levels $\sigma_-<\sigma_+$ at the two sampled points. Here we consider $\sigma_\pm=\pm 0.5$, which lies in the well-conditioned, non-saturated regime of $\tanh$ \cite{bolager2023sampling}. This gives $s_1=2\tanh^{-1}(0.5)$ and $s_2=-\tanh^{-1}(0.5)$ with $\phi_m(\bm{x}_{m,1})=-0.5$ and $\phi_m(\bm{x}_{m,2})=0.5$, making
 each $\tanh$ neuron geometrically a soft step centered on $\tfrac12(\bm{x}_{m,1}+\bm{x}_{m,2})$, oriented along $\bm{x}_{m,2}-\bm{x}_{m,1}$ with a transition width proportional to $\lVert\bm{x}_{m,2}-\bm{x}_{m,1}\rVert$. This construction binds the feature of each basis directly to the sampled pair. 
For the sampling of the pair $(\bm{x}_{m,1},\bm{x}_{m,2})$, we adopt 
the \emph{data-driven} sampling (\Cref{fig:sdfsnn-basis}(b)) guided by a probability: $$\rho(\bm{x}_{m,1})\rho(\bm{x}_{m,2})\|u_0(\bm{x}_{m,2})-u_0(\bm{x}_{m,1})\|/\|\bm{x}_{m,2}-\bm{x}_{m,1}\|.$$ %concentrating basis transitions near regions of rapid variation weighted by the envelope 
It is motivated by the fact that $\nabla u_0$ mainly reflects the internal variation and $\rho$ can direct basis transitions to where the solution structure needs to be resolved. This is important for both accuracy and efficiency. %\cite{bolager2023sampling,datar2024fast} 
%The weights and biases are then set so that the $\tanh$ output equals $-0.5$ at $\bm{x}_{(1)}$ and $+0.5$ at $\bm{x}_{(2)}$, aligning activation centers with the direction $\bm{x}_{(1)}\!\to\!\bm{x}_{(2)}$.
In contrast, LocalELM and RFM typically use data-agnostic sampling 
with weights drawn from a prescribed (e.g., Gaussian) distribution, producing basis with transitions not adapted to the target function (\Cref{fig:sdfsnn-basis}(a)). 

%The envelope factorization and gradient-guided sampling jointly address two difficulties of whole-space GPEs. First, on $\mathbb{R}^d$ no uniform measure exists and the normalized density $p_\rho(\bm{x}):=Z_\rho^{-1}\rho(\bm{x})^2$ (with $Z_\rho=\int_{\mathbb{R}^d}\rho^2\,d\bm{x}$) supplies a base measure concentrated on the physical support. Second, 

%The rule \cref{eqn:sample_weight} avoids both inefficiencies by coupling the base measure to $\rho$ and the reweighting to $\|\nabla u_0\|$ (\Cref{fig:sdfsnn-basis}(b)).

%The resulting SD-FSNN architecture is illustrated in \Cref{fig:sdfsnn-net} and consists of five conceptual layers.
%The \emph{input layer} takes $N_c$ collocation points $X\in\mathbb{R}^{N_c\times d}$.
%The \emph{sampled basis functions} layer then evaluates the frozen features $\Phi(X)\in\mathbb{R}^{N_c\times M}$ constructed via pairwise sampling \eqref{eqn:sample_weight}.
%$Next, the \emph{projection layer} augments the basis with a constant column to form $\Psi(X)=%[\Phi(X),\mathbf{1}_{N_c}]^\top\in\mathbb{R}^{(M+1)\times N_c}$.
%The \emph{SVD layer} compresses $\Psi(X)$ via a truncated SVD to obtain the orthogonalized reduced basis $\Psi_r(X)=U_r^\top\Psi(X)\in\mathbb{R}^{r\times N_c}$.
%Finally, the \emph{output layer} produces the approximation $C(t)\Psi_r(X)\in\mathbb{C}^{1\times N_c}$, where $C(t)\in\mathbb{C}^{1\times r}$ are the time-dependent coefficients evolved by an adaptive ODE solver.
%All hidden-layer parameters (weights, biases, and SVD factors) are frozen after initialization. Only $C(t)$ is updated during time integration.

\subsubsection{Separated space-time evaluation}
\label{sec:separat} 
Plugging \eqref{ansatz} into \eqref{eqn:GPE} and noting that $\rho$ is time-independent yields 
\begin{equation}\label{eqn:deenv}
\partial_t u=i\left[\tfrac12\nabla^2 u+(\nabla\rho/\rho)\cdot\nabla u+\tfrac12(\nabla^2\rho/\rho)u -V_du-\beta\rho^2|u|^2u\right].
\end{equation}
We denote the envelope-induced drift and Laplacian-ratio coefficients as
\begin{equation}\label{eqn:envelope_coeffs}
 a_j(\bm{x}):=\frac{\partial_{x_j}\rho(\bm{x})}{\rho(\bm{x})},\quad q_j(\bm{x}):=\frac{\partial_{x_j}^2\rho(\bm{x})}{\rho(\bm{x})},\quad q(\bm{x}):=\sum_{j=1}^d q_j(\bm{x})=\frac{\nabla^2\rho(\bm{x})}{\rho(\bm{x})}.
\end{equation}
%we can write the spatial operator in \cref{eqn:deenv} in the dimension-wise additive form used later in \Cref{sec:sdfsnn_weights}.

Substituting the ansatz \cref{eqn:ansatz} into \cref{eqn:deenv}, we take $N_c\in\mathbb{N}_+$ collocation points by importance sampling to give a projected least-squares ODE for $C(t)$.
%This reformulation preserves the inherent causal structure of time-dependent PDEs.
Assemble $N_c$ collocation points in a matrix $X\in\mathbb{R}^{N_c\times d}$, one point per row %(so that $\bm{x}\in\mathbb{R}^{1\times d}$ is a single point and $X$ its multi-point stack), 
and overload $\Phi,\Psi,\rho,V_d$ to accept  $\Phi(X)\in\mathbb{R}^{N_c\times M}$, $\Psi(X):=[\Phi(X),\mathbf{1}_{N_c}]^\top\in\mathbb{R}^{(M+1)\times N_c}$, and $\rho(X),V_d(X)\in\mathbb{R}^{N_c\times 1}$.
By letting $U(X,t):=C(t)\Psi(X)\in\mathbb{C}^{1\times N_c}$ %; the row-wise derivatives $\partial_{x_j}\Psi(X),\nabla^2\Psi(X)\in\mathbb{R}^{(M+1)\times N_c}$ have a zero last row (constant feature), with closed forms in \cref{eqn:ansatz_x,eqn:laplacian}.
 and denoting $\odot$ the entry-wise (Hadamard) product, we get the complex-valued projected ODEs:
\begin{align}
        \dot{C}(t) &= R_\rho(X,C(t))\Psi(X)^\dagger,\mbox{ with }\label{eqn:ode} \\
        R_\rho(X,C(t)) &= i\Big[\tfrac12 C(t)\nabla^2\Psi(X)
        +\sum_{j=1}^d a_j(X)^\top\odot\big(C(t)\partial_{x_j}\Psi(X)\big)\nonumber\\
        &\quad\ \ +\tfrac12 q(X)^\top\odot U(X,t)
        -V_d(X)^\top\odot U(X,t)
        \nonumber\\
        &\quad\ \ -\beta(\rho(X)^2)^\top\odot |U(X,t)|^2\odot U(X,t)\Big],\nonumber
\end{align}
where $a_j(X), q(X)\in\mathbb{R}^{N_c\times 1}$ are pointwise evaluations of $a_j$ and $q$ in \cref{eqn:envelope_coeffs}. %and $(\cdot)^\dagger$ denotes the Moore--Penrose pseudo-inverse.
The formulation \eqref{eqn:ode} is exact at the location level whenever its right-hand side is in the row space of $\Psi(X)$, 
otherwise $(\cdot)\Psi(X)^\dagger$ acts as the projection of the least-squares. % onto the row space of $\Psi(X)$.
%Since standard ODE solvers are designed for real-valued systems, we decompose the complex-valued parameters as $C(t) = C_{\text{re}}(t) + iC_{\text{im}}(t)$ with $C_{\text{re}}(t), C_{\text{im}}(t) \in \mathbb{R}^{1\times(M+1)}$.
%This decomposition yields the coupled real-valued ODE system
%\begin{equation}\label{eqn:ode_system}
 %   \frac{d}{dt}\begin{bmatrix} C_{\text{re}}(t) \\ C_{\text{im}}(t) \end{bmatrix}
%    = \begin{bmatrix} \text{Re}\left(R_\rho(X,C(t))\Psi(X)^\dagger\right) \\ \text{Im}\left(R_\rho(X,C(t))\Psi(X)^\dagger\right) \end{bmatrix}.
%\end{equation}
The initial condition of \cref{eqn:ode} is computed via a least-squares solution
\begin{equation}\label{eqn:C0}
C(0)=\big(\rho(X)^{-1}\odot\psi_0(X)\big)^\top\Psi(X)^\dagger,\qquad \psi_0(X)\in\mathbb{C}^{N_c\times 1},
\end{equation}
%which is similarly decomposed into real and imaginary parts for initialization.
and we can solve \cref{eqn:ode} efficiently by some ODE solvers. In our numerical experiments later, the solver is fixed as the explicit adaptive Dormand--Prince 5(4) pair (DOPRI5) with relative tolerance $10^{-10}$ and absolute tolerance $10^{-12}$. % unless otherwise specified. 
%The same solver is used in all experiments of \Cref{sec:exp}.
%The $2(M+1)$-dimensional real vector $(C_{\text{re}}, C_{\text{im}})$ is treated as the state variable.
%After each step, we reconstruct $C(t) = C_{\text{re}}(t) + iC_{\text{im}}(t)$ and then $\hat{\psi}(\bm{x},t)=\rho(\bm{x})C(t)\Psi(\bm{x})$.

Recall $\hat{u}(\bm{x},t) = C(t)\Psi(\bm{x})$ and we have fixed for the activation function $\sigma(z):=\tanh(z)$. Since the hidden parameters are fixed, all
 required spatial derivatives are indeed explicit ($\sigma'=1 - \sigma^2,\,\sigma''=-2\sigma(1-\sigma^2)$ applied entry-wise to vectors):
\begin{itemize}
\item The \textit{gradient} of the approximation is
    \begin{equation}\label{eqn:ansatz_x}
    \nabla_{\bm{x}}\hat{u}(\bm{x},t) = C(t)\begin{bmatrix} W \odot \tilde{\sigma}_{\bm{x}}(\bm{x}) \\ \mathbf{0}_{1 \times d} \end{bmatrix} \in \mathbb{C}^{1 \times d},
    \end{equation}
     where %$\mathbf{0}_{1 \times d}$ is a zero row vector, $\mathbf{1}_d \in \mathbb{R}^{d \times 1}$ is a column vector of ones, and
    $
        \tilde{\sigma}_{\bm{x}}(\bm{x}) := \sigma'\!\left(W\bm{x}^\top+b\right)\,\mathbf{1}_d^\top \in \mathbb{R}^{M \times d}$.
    %is the rank-one broadcast of $\sigma'(W\bm{x}^\top+b)\in\mathbb{R}^{M\times 1}$ across $d$ columns.
    \item The \textit{per-dimension second-order derivatives} of the approximation is
     \begin{equation}\label{eqn:ansatz_xx}
         \mathrm{diag}\!\big(\nabla^2_{\bm{x}}\hat{u}\big)(\bm{x},t) := \big(\partial_{x_1}^2\hat u,\ldots,\partial_{x_d}^2\hat u\big)(\bm{x},t)= C(t)\begin{bmatrix} W \odot W \odot \tilde{\sigma}_{\bm{x}\bm{x}}(\bm{x}) \\ \mathbf{0}_{1 \times d} \end{bmatrix} \in \mathbb{C}^{1 \times d},
     \end{equation}
    where $\tilde{\sigma}_{\bm{x}\bm{x}}(\bm{x}) := \sigma''(W\bm{x}^\top+b)\,\mathbf{1}_d^\top \in \mathbb{R}^{M\times d}$. % and $\sigma''= -2\sigma\sigma'$.
    Summing the above further leads to the \textit{Laplacian},
    \begin{align}
    \begin{split}\label{eqn:laplacian}
        \nabla^2 \hat{u}(\bm{x},t) &= \sum_{j=1}^{d} \frac{\partial^2 \hat{u}}{\partial x_j^2} = C(t)\begin{bmatrix} W \odot W \odot \tilde{\sigma}_{\bm{x}\bm{x}}(\bm{x}) \\ \mathbf{0}_{1 \times d} \end{bmatrix} \mathbf{1}_d \in \mathbb{C}.
    \end{split}
    \end{align}
    %where $\tilde{\sigma}_{\bm{x}\bm{x}}(\bm{x}) := \sigma''(W\bm{x}^\top+b)\,\mathbf{1}_d^\top \in \mathbb{R}^{M\times d}$.
   % which is the trace of the Hessian.
\end{itemize}

\subsubsection{SVD-based basis compression}
Next, we apply a singular value decomposition (SVD)-based linear transformation to reduce the stiffness and size of the associated ODEs \cref{eqn:ode}.
%To achieve this, we orthogonalize the basis functions using a truncated SVD.
For the augmented feature matrix $
    \Psi(X) = [\Phi(X), \mathbf{1}_{N_c}]^\top \in \mathbb{R}^{(M+1)\times N_c}$, 
we compute its truncated SVD:
\begin{equation*}
    U_r\Sigma_r V_r^\top \approx \Psi(X),\qquad U_r\in\mathbb{R}^{(M+1)\times r},\ \ \Sigma_r\in\mathbb{R}^{r\times r},\ \ V_r\in\mathbb{R}^{N_c\times r},
\end{equation*}
where $\Sigma_r=\mathrm{diag}(\nu_1,\ldots,\nu_r)$ with $\nu_1\geq\nu_2\geq\cdots\geq\nu_r>0$ the ordered singular values. The truncation rank $r\leq M+1$ is selected by a relative threshold $\epsilon_{\text{SVD}}\in(0,1)$ as
\begin{equation}\label{eq:svd_threshold}
    r := \max\{k:\nu_k/\nu_1>\epsilon_{\text{SVD}}\},
\end{equation}
and we then define the reduced orthogonal basis
\begin{equation}\label{eq:psi_r}
    \Psi_r(X) := U_r^\top \Psi(X) \in \mathbb{R}^{r \times N_c}.
\end{equation}
For any evaluation set $Y\in\mathbb{R}^{N_Y\times d}$, including $X_{\text{test}}$, we can use the same $U_r$  to obtain $\Psi_r(Y)=U_r^\top\Psi(Y)\in\mathbb{R}^{r\times N_Y}$.
The derivative matrices can be transformed accordingly:
\begin{equation}\label{eq:svd_transform}
    \partial_{x_j}\Psi_r(X) := U_r^\top\partial_{x_j}\Psi(X),\quad \nabla^2\Psi_r(X) := U_r^\top\nabla^2\Psi(X),%\quad C(t)\in\mathbb{C}^{1\times r},
\end{equation}
and $(\Psi,\partial_{x_j}\Psi,\nabla^2\Psi)$ in \cref{eqn:ode} can be replaced with $(\Psi_r,\partial_{x_j}\Psi_r,\nabla^2\Psi_r)$, since the transformation $U_r^\top$ commutes with the linear spatial differential operators.
This improves the orthogonality of the basis rows on $X$ and the conditioning of the pseudo-inverse. For numerical experiments later, we will use $\epsilon_{\text{SVD}}=10^{-12}$.

\subsubsection{Structure-preservation} In addition to the enforced decay at the far field, 
we now incorporate two more physical components: the normalization of mass  \cref{eqn:norm} and the preservation of   energy \cref{eqn:energy}. %Mass and the quadratic Hamiltonian are enforced at the discrete level up to floating-point roundoff; the quartic interaction term is monitored as a diagnostic.

{\textbf{Discrete mass normalization.}} 
To enforce discrete mass conservation, we introduce a coefficient-level normalization. %Recall the single-point feature column $\Psi(\bm{x})\in\mathbb{R}^{(M+1)\times 1}$ introduced after \cref{eqn:ansatz}, so that $\hat{\psi}(\bm{x},t)=\rho(\bm{x})C(t)\Psi(\bm{x})$. 
First, discretizing the integral  
$
\lVert \hat{\psi}(\cdot,t)\rVert_{L^2}^2$
 with the same collocation points and weights $\omega_n>0$ consistent with the sampling distribution  yields
\begin{equation}\label{eq:mass_disc}
\mathcal{M}(t) :=\sum_{n=1}^{N_c} \omega_n|\hat{\psi}(\bm{x}_n,t)|^2= \sum_{n=1}^{N_c} \omega_n \rho(\bm{x}_n)^2\lvert C(t)\Psi(\bm{x}_n)\rvert^2 = C(t) \, G \, C(t)^*,
\end{equation}
where the real symmetric Gram matrix $G$ is time-independent and precomputable,
\begin{equation*}
G=(G_{k,j})\in \mathbb{R}^{(M+1) \times (M+1)},\quad G_{kj} = \sum_{n=1}^{N_c} \omega_n \rho(\bm{x}_n)^2 \, \Psi_k(\bm{x}_n) \, \Psi_j(\bm{x}_n).
\end{equation*}
Equivalently, $G = \Psi(X)\,W_\rho\,\Psi(X)^\top$, by introducing 
\begin{equation}\label{eq:Wrho}
W_\rho := \mathrm{diag}\!\left(\omega_1\rho(\bm{x}_1)^2,\ldots,\omega_{N_c}\rho(\bm{x}_{N_c})^2\right) \in \mathbb{R}^{N_c\times N_c}.
\end{equation}
%we write $G = \Psi(X)\,W_\rho\,\Psi(X)^\top$, which we use later in the well-posedness analysis.
Suppose that we are solving ${C}(t)$ by \cref{eqn:ode} in each time interval $t\in[t_n,t_{n+1}]$ of the time step $\Delta t>0$ for $n\geq0$. With the (numerically) obtained $C(t_{n+1})$, we apply a projection: 
\begin{equation}
\label{eq:post_norm}
C(t_{n+1}) \to \frac{C(t_{n+1})}{\sqrt{C(t_{n+1}) \, G \, C(t_{n+1})^*}},
\end{equation}
and the approximated wave function at each time grid %is modified to
%\begin{equation}\label{eq:norm_ansatz}
$\hat{\psi}(\bm{x},t_{n+1}) = \rho(\bm{x})C(t_{n+1})\Psi(\bm{x})$
%\end{equation}
then gives $\mathcal{M}(t_{n+1})=1$.
If we begin with the projection on
the initial coefficient \cref{eqn:C0},
we can thus keep $\mathcal{M}(t_{n})=1$ for all $n\geq0$. This step-wise projection enforces discrete mass conservation with respect to the chosen quadrature rule, at a marginal overhead of $\mathcal{O}(M^2 + M N_c)$.
%For the unnormalized evolution, we set $\mathcal{Z}(t) := 1/\sqrt{\tilde{C}(t) G \tilde{C}(t)^*}$, a real positive scalar. We then write $\hat{\psi}(\bm{x},t) = \rho(\bm{x})\mathcal{Z}(t) \, \tilde{C}(t) \, \Psi(\bm{x})$.
%Time differentiation then yields
%\begin{equation*}
%\partial_t \hat{\psi}(\bm{x},t) = \rho(\bm{x})\dot{\mathcal{Z}}(t) \, \tilde{C}(t) \, \Psi(\bm{x}) + \rho(\bm{x})\mathcal{Z}(t) \, \tilde{C}_t(t) \, \Psi(\bm{x}).
%\end{equation*}

%Since $\mathcal{Z}(t)>0$ is real, $\dot{\mathcal{Z}}(t)$ represents an amplitude-rescaling term induced by normalization. 
%Rather than evolving this term as part of an enlarged constrained system, we use a projection method.
%The unnormalized coefficients $\tilde{C}(t)$ are evolved by \cref{eqn:ode} and then projected onto the discrete mass manifold $\{C:CGC^*=1\}$ via the rescaling

%Furthermore, it preserves the causal ODE structure from \Cref{sec:separat} and remains compatible with our framework, since $G$ encodes the spatial sampling.

%\begin{remark}[Projection effect of mass normalization]\label{rem:splitting}
%Let $\mathcal{S}_{\Delta t}^{\mathrm{RK}}$ denote one numerical RK step for \cref{eqn:ode_system} and let $\Pi_G:C\mapsto C/\sqrt{CGC^*}$. The map $\Pi_G\circ\mathcal{S}_{\Delta t}^{\mathrm{RK}}$ should be viewed as a projection method rather than as an exactly symplectic integrator. It enforces $CGC^*=1$ exactly whenever the denominator is nonzero, and constitutes the first step of the two-stage mass--energy projection introduced in \cref{eq:energy_projection}.
%\end{remark}

{\textbf{Discrete energy conservation.}} 
Optionally, we can construct a tangent projection on the discrete mass sphere that enforces the  discrete Hamiltonian conservation jointly with the mass constraint. To this end, we are
%\begin{equation}\label{eq:energy_cont}
%E(t) = \int_{\mathbb{R}^d} \left[ \frac{1}{2} \lvert \nabla \hat{\psi}(\bm{x},t) \rvert^2 + V_d(\bm{x}) \lvert \hat{\psi}(\bm{x},t) \rvert^2 + \frac{\beta}{2} \lvert \hat{\psi}(\bm{x},t) \rvert^4 \right] d\bm{x},
%\end{equation}
within the space-time separated framework of \Cref{sec:separat}, and we work with the SVD-compressed basis $\Psi_r$ of \cref{eq:psi_r}, so that $C(t)\in\mathbb{C}^{1\times r}$ with $r\leq M+1$. %The uncompressed case is recovered by setting $r=M+1$ and $\Psi_r=\Psi$.
We discretize \cref{eqn:energy} at the collocation points $X = \{\bm{x}_n\}_{n=1}^{N_c}$ with quadrature weights $\{\omega_n\}_{n=1}^{N_c}$ from \cref{eqn:ode}. The discrete energy then decomposes into a quadratic part and a quartic interaction part:
\begin{equation}\label{eq:energy_disc}
\mathcal{E}(C(t))=\underbrace{C\,H\,C^*}_{\mathcal{H}_2(C)}\;+\;\underbrace{\tfrac{\beta}{2}\sum_{n=1}^{N_c}\omega_n\,|\hat{\psi}(\bm{x}_n,t)|^4}_{E_{\mathrm{int}}(C)},\quad 
\hat{\psi}(\bm{x}_n,t)=\rho(\bm{x}_n)C(t)\Psi_r(\bm{x}_n),
\end{equation}
where $H\in\mathbb{C}^{r\times r}$ is the Hermitian matrix encoding kinetic and potential contributions,
\begin{equation}\label{eq:Hmat}
\begin{split}
H_{ij}=&\tfrac12\!\sum_{n}\omega_n\,\overline{\nabla_{\bm x}[\rho(\bm{x}_n)\,\Psi_{r,i}(\bm{x}_n)]}\!\cdot\!\nabla_{\bm x}[\rho(\bm{x}_n)\,\Psi_{r,j}(\bm{x}_n)]\\
&+\sum_n\omega_nV_d(\bm{x}_n)\rho(\bm{x}_n)^2\,\overline{\Psi_{r,i}(\bm{x}_n)}\,\Psi_{r,j}(\bm{x}_n),\end{split}
\end{equation}
with $\Psi_{r,i}$ denoting the $i$-th row of $\Psi_r$. 
Let $C^0$ be the initial coefficient after the mass projection \cref{eq:post_norm}, and the target discrete energy is
\begin{equation}\label{eq:energy_target}
E_0:=\mathcal{E}(C^0)=\mathcal{H}_2(C^0)+E_{\mathrm{int}}(C^0).
\end{equation}
Given (numerically obtained) ${C}(t_{n+1})$ from \eqref{eqn:ode}, we construct $C^{n+1}$ satisfying both $C^{n+1}G(C^{n+1})^*=1$ and $\mathcal{E}(C^{n+1})=E_0$ via a
{\textit{two-step closed-form projection}}. 
%We enforce $\mathcal{E}(C)=E_0$ as a hard constraint jointly with $CGC^*=1$ (compressed Gram matrix $G=\Psi_r(X)W_\rho\Psi_r(X)^\top$)  

%{Linear case ($\beta=0$)}: explicit computation.

\emph{Step 1 (mass).} Project for mass as described in \eqref{eq:post_norm}:
\begin{equation}\label{eq:step1_mass}
C^{(1)}:={C}(t_{n+1})\big/\sqrt{{C}(t_{n+1})\,G\,({C}(t_{n+1}))^*}.
\end{equation}

\emph{Step 2 (energy: tangential projection on mass sphere).} With $v:=G^{-1}H(C^{(1)})^{*}\in\mathbb{C}^{r\times 1}$, define
\begin{equation*}%\label{eq:energy_moments}
\begin{split}
&E_1\!:=\!C^{(1)}H(C^{(1)})^{*},\;\;
E_2\!:=\!v^{*}Gv,\;\;
\sigma_E\!:=\!\sqrt{E_2-E_1^{2}},\\
&\hat g\!:=\!\sigma_E^{-1}(v^{*}-E_1 C^{(1)}),\;\;
E_g\!:=\!\hat g\,H\,\hat g^{*}.
\end{split}
\end{equation*}
Since $H$ is Hermitian and $G$ is real symmetric positive definite, $E_1,E_g\in\mathbb{R}$ and $E_2\geq 0$. Moreover, $E_2-E_1^{2}=v^{*}Gv-(C^{(1)}Gv)(v^{*}GC^{(1)*})\geq 0$ by the Cauchy--Schwarz inequality applied to $C^{(1)}$ and $v$ (using $C^{(1)}G(C^{(1)})^{*}=1$), so $\sigma_E\geq 0$ is real and well-defined. If $\sigma_E=0$, then $v^{*}=E_1 C^{(1)}$ and $\mathcal{H}_2$ is locally stationary at $C^{(1)}$ on the mass sphere, so we set $C^{n+1}:=C^{(1)}$. Otherwise, $\hat g$ is well-defined and $G$-orthonormal to $C^{(1)}$, i.e., $\hat g G\hat g^{*}=1$ and $C^{(1)}G\hat g^{*}=0$. 
Along the geodesic $C_\theta=\cos\theta\,C^{(1)}+\sin\theta\,\hat g$, the mass is preserved, and
$
\mathcal{H}_2(C_\theta)=E_1\cos^{2}\theta+2\sigma_E\sin\theta\cos\theta+E_g\sin^{2}\theta.$ 
 With $a_n:=\rho(\bm{x}_n)C^{(1)}\Psi_r(\bm{x}_n)$ and $b_n:=\rho(\bm{x}_n)\hat g\,\Psi_r(\bm{x}_n)$, define $f_n(\theta):=\cos\theta\,a_n+\sin\theta\,b_n$ so that $\hat{\psi}(\bm{x}_n;C_\theta)=f_n(\theta)$. The total energy mismatch
\begin{equation}\label{eq:Phi_def}
\mathcal{R}(\theta):=\mathcal{H}_2(C_\theta)+\tfrac{\beta}{2}\sum_{n=1}^{N_c}\omega_n\bigl|f_n(\theta)\bigr|^{4}-E_0
\end{equation}
is a real trigonometric polynomial in $\theta$ of degree four, with closed-form derivative
\begin{equation*}%\label{eq:Phi_prime}
\mathcal{R}'(\theta) = (E_g - E_1)\sin 2\theta + 2\sigma_E\cos 2\theta + 2\beta\sum_{n=1}^{N_c}\omega_n\,|f_n(\theta)|^{2}\,\mathrm{Re}\bigl[\overline{f_n(\theta)}\,f_n'(\theta)\bigr],
\end{equation*}
where $f_n'(\theta)=-\sin\theta\,a_n+\cos\theta\,b_n$. %For an order-$p$ RK scheme, $|\mathcal{R}(0)|=\mathcal{O}(\Delta t^{p+1})$, and if $\mathcal{R}'(0)\neq 0$, a simple root $\theta^{\star}=\mathcal{O}(\Delta t^{p+1})$ 
The root $\theta^{\star}$ of \eqref{eq:Phi_def} can be found by Newton's iteration
$
\theta_{k+1}=\theta_k-\mathcal{R}(\theta_k)/\mathcal{R}'(\theta_k),\  k=0,1,\dots,$
warm-started at $\theta_0=0$ and terminated when $|\mathcal{R}(\theta_k)|\leq\mathrm{tol}_E$. %The warm start lies in the basin of quadratic convergence, 
In practice, two or three iterations typically suffice. 
Then  $C^{n+1}=\cos\theta^{\star}\,C^{(1)}+\sin\theta^{\star}\,\hat g$ satisfies $C^{n+1}G(C^{n+1})^{*}=1$ and $\mathcal{E}(C^{n+1})=E_0$ up to tolerance $\mathrm{tol}_E$ (set as $10^{-12}$ in experiments). %Relative to the closed form \cref{eq:theta_sol}, 
Each step requires an inner solver with per-iteration cost $\mathcal{O}(N_c)$. %On the rare events of $\sigma_E=0$, $|\mathcal{R}'(\theta_k)|$ below a safeguard threshold, or non-convergence within $K_{\max}$ iterations, the algorithm falls back to $C^{n+1}:=C^{(1)}$. As the closed-form quadratic projection with adaptive $E_{\mathrm{int}}$ control suffices (\Cref{fig:ablation_conservation}), this variant is not pursued.

\begin{figure}[t!]
    \centering
    \includegraphics[width=0.88\textwidth]{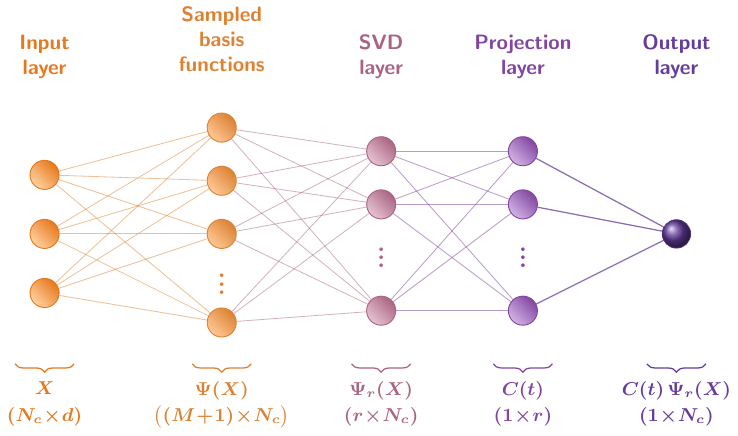}
    \vspace{-0.5em}
    \caption{\textbf{Architecture of SD-FSNN.} Collocation points $X$ pass through the frozen augmented basis $\Psi(X)=[\Phi(X),\mathbf{1}_{N_c}]^\top$, the SVD compression $\Psi_r(X)=U_r^\top\Psi(X)$, and the mass--energy projection on $C(t)$ to produce the output $C(t)\,\Psi_r(X)$. $C(t)$ evolves in time, with spatial parameters frozen.}
    \label{fig:sdfsnn-net}
\end{figure}
\begin{remark}[Linear case: $\beta=0$]
Let $A=\tfrac{E_1-E_g}{2},\;\;B=\sigma_E,\;\;D=E_0-\tfrac{E_1+E_g}{2}$, and $\mathcal{H}_2(C_\theta)=E_0$ gives 
$A\cos 2\theta+B\sin 2\theta=D.$
This equation is solvable iff $|D|\leq\sqrt{A^{2}+B^{2}}$ which is equivalent to $E_0\in[\,(E_1+E_g)/2 - \sqrt{A^{2}+B^{2}},\,(E_1+E_g)/2 + \sqrt{A^{2}+B^{2}}\,]$. For a convergent integrator, the condition can be satisfied for sufficiently small $\Delta t$. Then, the smallest-magnitude root is given in closed form by
\begin{equation*}%\label{eq:theta_sol}
\theta^{\star}=\tfrac12\!\left[\operatorname{atan2}(B,A)-\arccos\!\big(D/\sqrt{A^{2}+B^{2}}\big)\right]\!,
\end{equation*}
with $\operatorname{atan2}:\mathbb{R}^{2}\setminus\{(0,0)\}\to(-\pi,\pi]$ the four-quadrant arctangent and the principal branch $\arccos:[-1,1]\to[0,\pi]$. %The projected coefficient is then
%\begin{equation}\label{eq:energy_projection}
%$C^{n+1}=\cos\theta^{\star}\,C^{(1)}+\sin\theta^{\star}\,\hat g$
%which satisfies $C^{n+1}G(C^{n+1})^{*}=1,\,\mathcal{E}(C^{n+1})=\mathcal{H}_2(C^{n+1})=E_0$ by construction. 
The total cost per time level is $\mathcal{O}(M^{2}+M N_c)$, identical to that of mass-only projection \cref{eq:post_norm}. %The feasibility condition $|D|\leq\sqrt{A^{2}+B^{2}}$ in \cref{eq:theta_eq} is equivalent to $E_0\in[\,(E_1+E_g)/2 - \sqrt{A^{2}+B^{2}},\,(E_1+E_g)/2 + \sqrt{A^{2}+B^{2}}\,]$. For a convergent scheme, the condition can be satisfied for sufficiently small $\Delta t$.
\end{remark}

The resulting basic architecture of SD-FSNN is illustrated in \Cref{fig:sdfsnn-net}, consisting of five conceptual layers. Next, we add the stochastic-dimension sampling on it. 

\subsection{Stochastic-dimension operator sampling}
\label{sec:sdfsnn_weights}
Inspired by stochastic coordinate descent methods \cite{nesterov2012efficiency}, we now develop a stochastic-dimension sampling for differential operators in SD-FSNN to enable efficient high-dimensional evaluation.
% As another central new ingredient for efficient evaluation under high dimensionality, we now develop a stochastic-dimension sampling for differential operators in SD-FSNN.
%The construction proceeds in three steps: (i) exposing the dimension-wise additive structure of the de-enveloped spatial operator, (ii) defining an unbiased dimension-batch sampler that amortizes this sum, and (iii) plugging the sampler into the projected ODE for the coefficients $C(t)$.
%Recall that $\hat{\psi}(\bm{x},t)=\rho(\bm{x})\hat u(\bm{x},t)$ with $\hat u(\bm{x},t)=C(t)\Psi(\bm{x})$.

Firstly, we reveal the dimension-wise additive structure of the de-enveloped spatial operator. 
Using the envelope-induced coefficients introduced in \cref{eqn:envelope_coeffs}, the spatial part of the de-enveloped equation admits the dimension-wise additive decomposition
\begin{equation}
\label{eq:lap_additive}
\mathcal{A}_{\rho}\hat u
=\sum_{j=1}^{d}\mathcal{A}_{\rho,j}\hat u,\qquad
\mathcal{A}_{\rho,j}\hat u
:=\frac12\frac{\partial^2\hat u}{\partial x_j^2}
+a_j\frac{\partial\hat u}{\partial x_j}
+\frac12 q_j \hat u,
\end{equation}
with $a_j,q_j$ given in \eqref{eqn:envelope_coeffs}. 
%$q=\sum_{j=1}^d q_j$ recovers the Laplacian-ratio coefficient already used in \cref{eqn:ode}.
For our neural basis  $\phi_m(\bm{x})=\sigma(w_m \bm{x}^\top+b_m)$ with $w_m=(w_{m,1},\ldots,w_{m,d})$, the  derivatives
$\partial_{x_j}\phi_m(\bm{x})=w_{m,j}\,\sigma'(w_m\bm{x}^\top+b_m)$ and
$\partial^2_{x_j}\phi_m(\bm{x})=w_{m,j}^2\,\sigma''(w_m\bm{x}^\top+b_m)$
show that the $j$-th additive contribution depends only on the $j$-th component of the frozen weight together with the corresponding envelope derivatives in direction $j$.
This per-coordinate separability is precisely what could make dimension-wise sampling effective.

Thus, we propose an \textit{unbiased dimension-batch sampler}. 
Building on the additive structure \eqref{eq:lap_additive}, we amortize the dimension-wise sum by sampling only a small subset of coordinates at each evaluation from $t_n\to t_{n+1}$.
Let $J\subset\{1,\dots,d\}$ be a random index set with $|J|=s \ll d$, drawn uniformly so that $\mathbb{P}(j\in J)=s/d$ for every $j$.
The stochastic-dimension spatial-operator sampler is then defined as
\begin{equation}
\label{eq:sdfsnn_laplacian}
\widetilde{\mathcal{A}}_{\rho,J}\hat u(\bm{x},t)
:= \frac{d}{s}\sum_{j\in J}\mathcal{A}_{\rho,j}\hat u(\bm{x},t),\quad t_n\leq t\leq t_{n+1}.
\end{equation}
By linearity and symmetry of uniform sampling, this estimator is unbiased, stated rigorously in the next subsection.  
%\begin{equation}\label{eq:sdfsnn_unbiased_lap}
%$\mathbb{E}_{J}[\widetilde{\mathcal{A}}_{\rho,J}\hat u(\bm{x},t)]
%= \mathcal{A}_{\rho}\hat u(\bm{x},t).$
%\end{equation}
A simple non-uniform variant retains the same property: for $J=\{j_1,\ldots,j_s\}$ with $j_1,\ldots,j_s\stackrel{\mathrm{i.i.d.}}{\sim}p(\cdot)$, $p_j>0$ and $\sum_{j=1}^d p_j=1$, the inverse-probability estimator
%\begin{equation}\label{eq:sdfsnn_lap_nonuniform}
$\frac{1}{s}\sum_{\ell=1}^{s}\frac{1}{p_{j_\ell}}
\mathcal{A}_{\rho,j_\ell}\hat u(\bm{x},t)$
is also unbiased. 
Crucially, each sampled coordinate $j\in J$ activates only its corresponding weight components $\{w_{m,j}\}_{m=1}^{M}$ and the envelope derivatives in direction $j$, so \cref{eq:sdfsnn_laplacian}  reduces the spatial-operator evaluation cost from $\mathcal{O}(Md)$ to $\mathcal{O}(Ms)$.

%\paragraph{\textbf{SD-FSNN residual and induced ODE}}
We now substitute the sampler \eqref{eq:sdfsnn_laplacian} into the  ODEs \cref{eqn:ode} for each time level $t_n\leq t\leq t_{n+1}$. Note that the deterministic right-hand side reads
\begin{align*}
R_\rho(X,C(t))
= i\big[
\mathcal{A}_{\rho}\hat u(X,t)
- V_d(X)^\top\odot U(X,t)
- \beta(\rho(X)^2)^\top\odot |U(X,t)|^2\odot U(X,t)
\big],%\label{eqn:R_exact}
\end{align*}
%with projected vector field
%\begin{equation}\label{eqn:F_exact}
%F_\rho(C(t)) := R_\rho(X,C(t))\Psi(X)^\dagger,
%\end{equation}
then its stochastic-dimension counterpart is obtained by replacing $\mathcal{A}_{\rho}$ with $\widetilde{\mathcal{A}}_{\rho,J}$:
\begin{align}
\label{eq:sdfsnn_ode}
\dot{C}(t)
&:= \widetilde{R}_{\rho,J}(X,C(t))\Psi(X)^\dagger,\quad t_n\leq t\leq_{t_{n+1}},\\
\widetilde{R}_{\rho,J}(X,C(t))
&:= i\Big[
\widetilde{\mathcal{A}}_{\rho,J}\hat u(X,t)
- V_d(X)^\top \odot U(X,t) \nonumber\\
&\qquad
- \beta(\rho(X)^2)^\top\odot |U(X,t)|^2\odot U(X,t)
\Big], \nonumber
\end{align}
and the sampled dimensions will be renewed at the next time level. 
For each fixed $(W,b,X,C(t))$ and an independent draw of $J$, the stochastic right-hand side is conditionally unbiased
$
\mathbb{E}_{J}\!\left[\widetilde{R}_{\rho,J}(X,C(t))\,\middle|\,W,b,X,C(t)\right] = R_\rho(X,C(t))$ (will be proved below), 
so its variance can be reduced by enlarging $|J|$ or by averaging $K$ independent dimension batches per evaluation, which divides the conditional variance by $K$.
%In \Cref{alg:sdfsnn} we hold $J$ fixed across all RK stages of a single time step; this removes inter-stage sampling fluctuations and prevents the local truncation error estimate of DOPRI5 from being polluted by stage-dependent stochastic biases.

%\subsection{Training procedures and implementation}
%Having established well-posedness of the semi-discrete system, we outline the implementation of SD-FSNN, which requires no loss function or iterative backpropagation. 
The complete SD-FSNN procedure, including sampling, projection and time evolution, is summarized in \Cref{alg:sdfsnn}. Within each time level in our implementation, the index set $J$ is fixed across all DOPRI5 stages, which removes inter-stage sampling fluctuations and prevents the local truncation error from being polluted by stage-dependent stochastic biases.

\begin{algorithm}[h!]
\caption{Full computational procedure of SD-FSNN}
\label{alg:sdfsnn}
\textbf{Input}: Initial condition $\psi_0$, final time $T>0$, test grid points $X_{\text{test}}$, evaluation times $T_{\text{test}}\subset[0,T]$, dimension batch size $s \ll d$\\
\textbf{Output}: Predicted PDE solution $\hat{\psi}(X_{\text{test}}, T_{\text{test}})$\\
\textbf{Parameters}: $N_c$, $M$, $\epsilon_{\text{SVD}}$, $\rho$%, tolerances $(\mathrm{rtol},\mathrm{atol})$
\begin{algorithmic}[1]
\setlength{\itemsep}{0pt}
\setlength{\parsep}{0pt}
\STATE Sample $N_c$ collocation points $X\in\mathbb{R}^{N_c\times d}$.
\STATE Initialize frozen parameters $\{W,b\}$: for each $1\leq m\leq M$, sample a pair $(\bm{x}_{m,1},\bm{x}_{m,2})$ via the data-driven density and set $(w_m,b_m)$ by \cref{eqn:sample_weight}.
\STATE Compute augmented feature matrix: $\Psi(X) = [\Phi(X), \mathbf{1}_{N_c}]^\top \in \mathbb{R}^{(M+1) \times N_c}$.
\STATE Assemble $W_\rho$ as in \cref{eq:Wrho}.
\STATE Compute truncated SVD: $U_r \Sigma_r V_r^\top \approx \Psi(X)$ using threshold $\epsilon_{\text{SVD}}$.
\STATE Obtain reduced orthogonal bases: $\Psi_r(X) = U_r^\top \Psi(X) \in \mathbb{R}^{r \times N_c}$; set $G= \Psi_r(X)W_\rho\Psi_r(X)^\top$.
\STATE Initialize coefficients via least-squares: $C(0) = \big(\rho(X)^{-1} \odot \psi_0(X)\big)^\top \Psi_r(X)^\dagger$.
\STATE Apply initial mass normalization: $C^0=  C(0) / \sqrt{C(0) G C(0)^*}$.
%\STATE Assemble $H$ via \cref{eq:Hmat} and set the target quadratic energy $E_0^{\mathrm{quad}} := \mathcal{H}_2(C(0))$ via \cref{eq:energy_target}.
\FOR{each DOPRI5 step $t_n \to t_{n+1}$ with $0\leq t_n\le T$}
    \STATE Uniformly sample a random dimension subset $J \subset \{1, \dots, d\}$ with $|J| = s$, held fixed across all RK stages of this step.
   % \STATE Compute $\widetilde{\mathcal{A}}_{\rho,J}\hat u(X,t_n)$ via \cref{eq:sdfsnn_laplacian}.
    \STATE Evaluate $\widetilde{R}_{\rho,J}(X, C)$ for ODEs \cref{eq:sdfsnn_ode}.
    \STATE Obtain ${C}(t_{n+1})$ by one DOPRI5 step applied to \cref{eq:sdfsnn_ode}.
    \STATE Projection: mass-project via \eqref{eq:step1_mass}; energy-projection (optinal) for $C^{n+1}$.  %compute the scalars in \cref{eq:energy_moments}; 
   % if $\sigma_E=0$ or \cref{eq:theta_eq} is infeasible, set $C^{n+1}=C^{(1)}$. %otherwise form the tangent $\hat g$, solve $\theta^\star$ from \cref{eq:theta_sol}, and set $C(t+\Delta t)$ via \cref{eq:energy_projection}.
\ENDFOR
\STATE \textbf{Return}: for each $t_n\in T_{\text{test}}$, $\hat{\psi}(X_{\text{test}}, t_n) = \rho(X_{\text{test}})^\top \odot \big(C^n\,\Psi_r(X_{\text{test}})\big)$.
\end{algorithmic}
\end{algorithm}

%\paragraph{\textbf{Conditional unbiasedness of SD-FSNN}}

\subsection{Numerical analysis}
For the proposed SD-FSNN method, we provide some theoretical analysis results, supporting the unbiasedness of the dimension-batch sampler and the well-posedness of the coefficient dynamics.

\subsubsection{Conditional unbiasedness and variance}
We begin with the bias of the SD-FSNN. The sampler $\widetilde{\mathcal{A}}_{\rho,J}$ is built so that, conditioned on the current state, its expectation over $J$ recovers the deterministic operator $\mathcal{A}_{\rho}$. 
This property propagates through the residual $R_\rho$ in \cref{eqn:ode} and through the projected vector field
\begin{equation}\label{eqn:F_exact}
F_\rho(C(t)) := R_\rho(X,C(t))\Psi(X)^\dagger,
\qquad
\widetilde{F}_{\rho,J}(C(t)) := \widetilde{R}_{\rho,J}(X,C(t))\Psi(X)^\dagger,
\end{equation}
which is the right-hand side of \cref{eqn:ode} (resp.\ of its stochastic counterpart \cref{eq:sdfsnn_ode}) viewed as a map of the coefficient vector alone.

\begin{theorem}[Conditional unbiasedness of SD-FSNN]
\label{thm:sdfsnn_unbiased_gpe}
At $t_n$ of \Cref{alg:sdfsnn}, let $\mathcal{G}_n$ denote the $\sigma$-algebra generated by the frozen parameters $(W,b)$, the collocation set $X$, the coefficient state $C^n$ and the dimension batches $\{J_k\}_{k<n}$ from previous steps. Assume that the current dimension batch $J=J_n$ is sampled uniformly without replacement from $\{1,\dots,d\}$ with $|J|=s$, independently of $\mathcal{G}_n$.
Then, the SD-FSNN stochastic spatial operator satisfies
\begin{equation}\label{eqn:unbiased_lap}
\mathbb{E}\!\left[\widetilde{\mathcal{A}}_{\rho,J}\hat u(\bm{x},t_n)\,\middle|\,\mathcal{G}_n\right]
=\mathcal{A}_{\rho}\hat u(\bm{x},t_n),\qquad \forall\,\bm{x}\in\mathbb{R}^{1\times d}.
\end{equation}
Consequently, with $F_\rho$ and $\widetilde{F}_{\rho,J}$ as in \cref{eqn:F_exact},
\begin{equation}\label{eqn:unbiased_R_F}
\mathbb{E}\!\left[\widetilde{R}_{\rho,J}(X,C^n)\,\middle|\,\mathcal{G}_n\right] = R_\rho(X,C^n),
\qquad
\mathbb{E}\!\left[\widetilde{F}_{\rho,J}(C^n)\,\middle|\,\mathcal{G}_n\right] = F_\rho(C^n).
\end{equation}
%The identities also hold componentwise for the coupled real-valued system obtained by taking $\mathrm{Re}(\cdot)$ and $\mathrm{Im}(\cdot)$.
\end{theorem}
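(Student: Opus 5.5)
The plan is to write the sampler as a linear combination of indicator variables and use linearity of conditional expectation. Conditioned on $\mathcal{G}_n$, the quantities $W,b,X,C^n$ are fixed. Hence for each fixed $\bm{x}$ and each $j$, the term $\mathcal{A}_{\rho,j}\hat u(\bm{x},t_n)$ is $\mathcal{G}_n$-measurable. By \cref{eq:lap_additive} and the closed-form derivatives \cref{eqn:ansatz_x,eqn:ansatz_xx}, this term is the finite complex number
\[
C^n\big[\tfrac12\partial_{x_j}^2\Psi(\bm{x})+a_j(\bm{x})\partial_{x_j}\Psi(\bm{x})+\tfrac12 q_j(\bm{x})\Psi(\bm{x})\big].
\]
I would rewrite
\[
\widetilde{\mathcal{A}}_{\rho,J}\hat u(\bm{x},t_n)=\frac{d}{s}\sum_{j=1}^d \mathbf{1}_{\{j\in J\}}\,\mathcal{A}_{\rho,j}\hat u(\bm{x},t_n).
\]
This turns a sum over a random index set into a deterministic finite sum with random $\{0,1\}$ weights.

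The key step is computing $\mathbb{E}[\mathbf{1}_{\{j\in J\}}\mid\mathcal{G}_n]$. Since $J$ is independent of $\mathcal{G}_n$, this equals $\mathbb{P}(j\in J)$. For a uniform $s$-subset drawn without replacement, the probability is $\binom{d-1}{s-1}/\binom{d}{s}=s/d$, by a symmetry count. The $\mathcal{G}_n$-measurable factors can be pulled out of the conditional expectation. This requires integrability, which is automatic here: the sum is finite, the coefficients are finite, and the indicators are bounded. Summing, the factor $d/s$ cancels $s/d$, which gives \cref{eqn:unbiased_lap} for every $\bm{x}$. Applying this at each collocation point (each row of $X$) gives the vector identity $\mathbb{E}[\widetilde{\mathcal{A}}_{\rho,J}\hat u(X,t_n)\mid\mathcal{G}_n]=\mathcal{A}_\rho\hat u(X,t_n)$.

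For \cref{eqn:unbiased_R_F}, I would observe that $\widetilde{R}_{\rho,J}(X,C^n)-R_\rho(X,C^n)=i\big[\widetilde{\mathcal{A}}_{\rho,J}\hat u(X,t_n)-\mathcal{A}_\rho\hat u(X,t_n)\big]$. This holds because the potential and cubic terms $V_d\odot U$ and $\beta\rho^2\odot|U|^2\odot U$ do not depend on $J$ and are $\mathcal{G}_n$-measurable. Taking conditional expectations then gives the first identity. For the second, $\Psi(X)^\dagger$ (or $\Psi_r(X)^\dagger$ after compression) is a deterministic matrix, measurable with respect to $\mathcal{G}_n$. Right-multiplication by it is linear and commutes with conditional expectation, entrywise and separately for real and imaginary parts. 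So $\mathbb{E}[\widetilde F_{\rho,J}(C^n)\mid\mathcal{G}_n]=F_\rho(C^n)$.

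There is no serious obstacle. The one point needing care is that the statement concerns evaluation at the state $C^n$ at time $t_n$, not along the stochastic trajectory inside $[t_n,t_{n+1}]$. Inside the step, $C(t)$ depends on $J$ through the DOPRI5 stages, so unbiasedness would fail there. I would make this explicit and restrict the claim to the $\mathcal{G}_n$-measurable state, so that the nonlinear term creates no bias.
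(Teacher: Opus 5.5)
Your proposal is correct and follows essentially the same argument as the paper. Both write the sampler as $\frac{d}{s}\sum_{j=1}^d \mathbf{1}_{\{j\in J\}}\mathcal{A}_{\rho,j}\hat u(\bm{x},t_n)$ and use $\mathbb{E}[\mathbf{1}_{\{j\in J\}}\mid\mathcal{G}_n]=s/d$. Both then pass to $\widetilde{R}_{\rho,J}$ and $\widetilde{F}_{\rho,J}$ because the potential and cubic terms are $\mathcal{G}_n$-measurable and right-multiplication by the $\mathcal{G}_n$-measurable pseudo-inverse commutes with conditional expectation. Your extras are sound refinements that the paper leaves implicit: the counting argument for $\mathbb{P}(j\in J)=s/d$, and the restriction to the state $C^n$ rather than the intra-step DOPRI5 stages.
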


\begin{proof}
Condition on $\mathcal{G}_n$ and set $f_j:=\mathcal{A}_{\rho,j}\hat u(\bm{x},t_n)$, $\delta_j:=\mathbf{1}_{\{j\in J\}}$. By the assumed sampling design, $\mathbb{E}[\delta_j\mid\mathcal{G}_n]=s/d$ for every $j$. Hence,
\[
\mathbb{E}\!\left[\widetilde{\mathcal{A}}_{\rho,J}\hat u(\bm{x},t_n)\,\middle|\,\mathcal{G}_n\right]
=\tfrac{d}{s}\sum_{j=1}^{d}\mathbb{E}[\delta_j\mid\mathcal{G}_n]\,f_j
=\sum_{j=1}^{d} f_j
=\mathcal{A}_{\rho}\hat u(\bm{x},t_n),
\]
which proves \cref{eqn:unbiased_lap}.
For \cref{eqn:unbiased_R_F}, note that the potential term $V_d(X)^{\top}\odot U(X,t_n)$ and the cubic term $\beta(\rho(X)^{2})^{\top}\odot|U(X,t_n)|^{2}\odot U(X,t_n)$ are $\mathcal{G}_n$-measurable, hence constant under $\mathbb{E}[\,\cdot\mid\mathcal{G}_n]$. Only $\mathcal{A}_{\rho}\hat u(X,t_n)$ is replaced by $\widetilde{\mathcal{A}}_{\rho,J}\hat u(X,t_n)$, so conditional linearity together with \cref{eqn:unbiased_lap} applied row-wise yields $\mathbb{E}[\widetilde{R}_{\rho,J}(X,C^n)\mid\mathcal{G}_n]=R_\rho(X,C^n)$. Since $\Psi(X)^{\dagger}$ is deterministic given $\mathcal{G}_n$, right-multiplication preserves the identity, giving the second equality in \cref{eqn:unbiased_R_F}. %The real/imaginary statements follow because $\mathrm{Re}$ and $\mathrm{Im}$ are bounded linear functionals.
\end{proof}

\begin{proposition}
[Variance of dimension-batch sampler]
\label{lem:sdfsnn_variance}
Under assumptions of \Cref{thm:sdfsnn_unbiased_gpe}, define the complex-valued conditional variance by $\mathrm{Var}(Z\mid\mathcal{G}_n):=\mathbb{E}[|Z-\mathbb{E}(Z\mid\mathcal{G}_n)|^2\mid\mathcal{G}_n]$. Let $f_j:=\mathcal{A}_{\rho,j}\hat u(\bm{x},t_n),\ \bar f:=\tfrac{1}{d}\sum_{j=1}^{d} f_j$, then,
\begin{equation}\label{eq:sdfsnn_variance}
\mathrm{Var}\!\left[\widetilde{\mathcal{A}}_{\rho,J}\hat u(\bm{x},t_n)\,\middle|\,\mathcal{G}_n\right]
=\frac{d(d-s)}{s(d-1)}\,\sum_{j=1}^d|f_j-\bar f|^2.
\end{equation}
The prefactor decreases in $s$ and vanishes at $s=d$. 
The dimension scaling is therefore controlled by the empirical coordinate fluctuation $\sum_{j=1}^d|f_j-\bar f|^2$.
\end{proposition}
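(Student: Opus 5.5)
Conditionally on $\mathcal{G}_n$, the numbers $f_1,\dots,f_d$ are fixed complex constants, and the only randomness in the estimator comes from the indicators $\delta_j=\mathbf{1}_{\{j\in J\}}$. So the plan is to write
\[
\widetilde{\mathcal{A}}_{\rho,J}\hat u(\bm{x},t_n)=\tfrac{d}{s}\sum_{j=1}^d\delta_j f_j ,
\]
and compute the second moments of the $\delta_j$ under uniform sampling without replacement. We will use $\mathbb{E}[\delta_j\mid\mathcal{G}_n]=s/d$ and, for $j\neq k$, $\mathbb{E}[\delta_j\delta_k\mid\mathcal{G}_n]=\tfrac{s(s-1)}{d(d-1)}$. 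From these,
\[
\mathrm{Var}(\delta_j)=\tfrac{s}{d}\big(1-\tfrac{s}{d}\big),\qquad \mathrm{Cov}(\delta_j,\delta_k)=-\tfrac{s(d-s)}{d^2(d-1)}\quad (j\neq k).
\]

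Next, we reduce to centered values. Since $\sum_j\delta_j=s$ deterministically, replacing $f_j$ by $g_j:=f_j-\bar f$ shifts the estimator by the constant $\tfrac{d}{s}\,s\bar f=d\bar f$. This leaves the variance unchanged, and it also gives $\sum_j g_j=0$. Because the $\delta_j$ are real, the complex variance expands as
\[
\mathrm{Var}=\tfrac{d^2}{s^2}\sum_{j,k}\mathrm{Cov}(\delta_j,\delta_k)\,g_j\overline{g_k}.
\]
We split this sum into diagonal and off-diagonal parts. In the off-diagonal part we use $\sum_{j\neq k}g_j\overline{g_k}=\big|\sum_j g_j\big|^2-\sum_j|g_j|^2=-\sum_j|g_j|^2$. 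This gives
\[
\mathrm{Var}=\tfrac{d^2}{s^2}\Big[\tfrac{s(d-s)}{d^2}+\tfrac{s(d-s)}{d^2(d-1)}\Big]\sum_j|g_j|^2=\tfrac{d-s}{s}\cdot\tfrac{d}{d-1}\sum_j|g_j|^2,
\]
which is exactly \cref{eq:sdfsnn_variance}.

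The remaining claims are immediate. The prefactor $\tfrac{d(d-s)}{s(d-1)}=\tfrac{d}{d-1}\big(\tfrac{d}{s}-1\big)$ is strictly decreasing in $s$ and equals zero at $s=d$. The step that needs the most care is the algebra in the off-diagonal combination: the signs and the finite-population factor $d/(d-1)$ must come out correctly. Working with the centered $g_j$ from the start is what keeps this step clean. Throughout, we assume $d\ge2$ so that $d-1\neq0$.
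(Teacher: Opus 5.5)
Your proposal is correct and follows essentially the same route as the paper: you use the indicator representation with the simple-random-sampling variance $s(d-s)/d^2$ and covariance $-s(d-s)/[d^2(d-1)]$, center via $g_j=f_j-\bar f$ using $\sum_j\delta_j=s$, and apply the identity $\sum_{j\neq k}g_j\overline{g_k}=-\sum_j|g_j|^2$. You are slightly more explicit than the paper in two places: you derive the covariance from the joint inclusion probability $s(s-1)/[d(d-1)]$, and you note the standing requirement $d\ge 2$.
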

\begin{proof}
For simple random sampling of size $s$ without replacement, $\mathrm{Var}(\delta_j\mid\mathcal{G}_n)=s(d-s)/d^2$ for every $j$, and $\mathrm{Cov}(\delta_j,\delta_k\mid\mathcal{G}_n)=-s(d-s)/[d^2(d-1)]$ for $j\neq k$. 
Set $g_j:=f_j-\bar f\in\mathbb{C}$, so that $\sum_{j=1}^{d} g_j=0$. Since $\widetilde{\mathcal{A}}_{\rho,J}\hat u(\bm{x},t_n)-\mathcal{A}_{\rho}\hat u(\bm{x},t_n)=(d/s)\sum_{j=1}^d\delta_j g_j$ and $\big|\sum_{j=1}^{d} g_j\big|^2=0$ expands as $\sum_{j} |g_j|^2+\sum_{j\neq k} g_j\overline{g_k}=0$, we have $\sum_{j\neq k} g_j\overline{g_k}=-\sum_{j} |g_j|^2$. Substituting the variance and covariance identities yields
\[
\mathbb{E}\left[\big|\textstyle\sum_{j=1}^{d}\delta_j g_j\big|^{2}\big| \mathcal{G}_n\right]
=\frac{s(d-s)}{d^{2}}\sum_{j=1}^d|g_j|^2-\frac{s(d-s)}{d^{2}(d-1)}\sum_{j\neq k} g_j\overline{g_k}
=\frac{s(d-s)}{d(d-1)}\sum_{j=1}^d|g_j|^2,
\]
which is \cref{eq:sdfsnn_variance} by 
multiplying $(d/s)^2$ on both sides.
\end{proof}

%\subsubsection{Local well-posedness of SD-FSNN}
%\label{sec:well-posedness}
\subsubsection{Well-posedness}
\label{sec:well-posedness}
In addition to the unbiasedness and variance estimates above that control the stochastic forcing locally in time, we now study the 
reduced dynamics on the coefficients. 
%the cubic nonlinearity $|\psi|^{2}\psi$ does not drive the projected coefficients to blow up: we establish local well-posedness of the coefficient ODE and a uniform bound for the mass-projected discrete sequence, with constants that are independent of the random batches $\{J_n\}$.

Recall from \cref{eq:svd_threshold,eq:psi_r} that $r\leq M+1$ is the rank retained by the truncated SVD and that $\Psi_r:=\Psi_r(X)\in\mathbb{R}^{r\times N_c}$ is the resulting orthogonalized feature matrix on the collocation set $X\in\mathbb{R}^{N_c\times d}$. On the matrix,  
the modulus $|\cdot|^{2}$ and the Hadamard product $\odot$ act entry-wise.
The ODEs for the coefficient vector $C(t)\in\mathbb{C}^{1\times r}$ can then be abstracted as the initial value problem
\begin{equation}\label{eq:ode_abstract}
    \frac{dC(t)}{dt}=\mathcal{F}(C(t)):=\mathcal{L}(C(t))+\mathcal{N}(C(t)),\qquad C(0)=C_0,
\end{equation}
where $\mathcal{L},\mathcal{N}:\mathbb{C}^{1\times r}\to\mathbb{C}^{1\times r}$ are the linear and nonlinear parts of $\mathcal{F}$, obtained by restricting the projected vector field $F_\rho$ in \cref{eqn:F_exact} to the SVD-compressed basis $\Psi_r$: %(so that $\Psi(X)^\dagger$ is replaced by $\Psi_r^\dagger$):
\begin{subequations}
\begin{align}
    \mathcal{L}(C) &= i\left[\mathcal{A}_{\rho}(C\Psi_r) - C\Psi_r\,\mathrm{diag}(V_d(X))\right]\Psi_r^{\dagger}, \label{eq:op_L}\\
    \mathcal{N}(C) &= -i\beta\left[(\rho(X)^{2})^{\top}\odot|C\Psi_r|^{2}\odot(C\Psi_r)\right]\Psi_r^{\dagger}. \label{eq:op_N}
    \end{align}
    \end{subequations}
%where $\Psi_r^{\dagger}$ is the Moore--Penrose pseudo-inverse and $\mathcal{A}_\rho$ acts row-wise on the collocation values as in \cref{eq:lap_additive}.

\begin{lemma}[Local existence]\label{lem:lipschitz}
For any $R>0$, the vector field $\mathcal{F}$ in \cref{eq:ode_abstract} is locally Lipschitz on the closed ball $\mathcal{B}_R:=\{C\in\mathbb{C}^{1\times r}:\|C\|_{2}\leq R\}$, with a Lipschitz constant
$
L_R = \|\mathcal{L}\|_{\mathrm{op}}+3|\beta|\,\|\rho(X)^{2}\|_{\infty}\,\|\Psi_r^{\dagger}\|_{\mathrm{op}}\,\|\Psi_r\|_{\mathrm{op}}^{3}\,R^{2}<\infty.$
Consequently, \cref{eq:ode_abstract} admits a unique solution $C\in C^{1}([0,T_{\max});\mathbb{C}^{1\times r})$ for some maximal time $T_{\max}>0$.
\end{lemma}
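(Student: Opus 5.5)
The plan is to split $\mathcal{F}=\mathcal{L}+\mathcal{N}$ as in \cref{eq:op_L,eq:op_N} and bound each part's increment on $\mathcal{B}_R$ separately, then invoke the Picard--Lindel\"of theorem. Throughout, $\mathbb{C}^{1\times r}$ is identified with $\mathbb{R}^{2r}$. The map $\mathcal{L}$ is $\mathbb{R}$-linear (in fact $\mathbb{C}$-linear): $C\mapsto C\Psi_r$ is linear, $\mathcal{A}_\rho$ acts on collocation values through the fixed matrices $\partial_{x_j}\Psi_r(X),\nabla^2\Psi_r(X)$ and the fixed vectors $a_j(X),q(X)$, and multiplication by $\mathrm{diag}(V_d(X))$ and by $\Psi_r^\dagger$ is linear. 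Since the space is finite dimensional, $\|\mathcal{L}\|_{\mathrm{op}}<\infty$ and
\[
\|\mathcal{L}(C)-\mathcal{L}(D)\|_2\le\|\mathcal{L}\|_{\mathrm{op}}\|C-D\|_2
\]
for all $C,D$, not only those in the ball.

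For the nonlinear part, set $u=C\Psi_r$ and $v=D\Psi_r$ in $\mathbb{C}^{1\times N_c}$, with $\|u\|_2\le\|\Psi_r\|_{\mathrm{op}}R$ and likewise for $v$. The key step is the entrywise algebraic identity
\[
|u|^2u-|v|^2v=|u|^2(u-v)+v\,\bar u\,(u-v)+v^2\,\overline{(u-v)},
\]
which gives $\big||u_n|^2u_n-|v_n|^2v_n\big|\le(|u_n|^2+|u_n||v_n|+|v_n|^2)|u_n-v_n|$ for each collocation index $n$. Multiplying by the weight $\rho(\bm x_n)^2\le\|\rho(X)^2\|_\infty$ and taking the $\ell^2$ norm, I will use $\|a\odot b\|_2\le\|a\|_\infty\|b\|_2$ together with $\|a\|_\infty\le\|a\|_2$. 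This yields
\[
\big\|(\rho(X)^2)^\top\odot(|u|^2u-|v|^2v)\big\|_2\le 3\|\rho(X)^2\|_\infty\max(\|u\|_2,\|v\|_2)^2\|u-v\|_2 .
\]
Combining this with $\|u-v\|_2\le\|\Psi_r\|_{\mathrm{op}}\|C-D\|_2$, and multiplying on the right by $\Psi_r^\dagger$, produces the stated factor $3|\beta|\|\rho(X)^2\|_\infty\|\Psi_r^\dagger\|_{\mathrm{op}}\|\Psi_r\|_{\mathrm{op}}^3R^2$. The triangle inequality then gives $L_R$.

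This elementwise estimate is the only delicate step. The crude bound $\|\cdot\|_\infty\le\|\cdot\|_2$ is what makes the constant $3$ and the power $\|\Psi_r\|_{\mathrm{op}}^3$ come out exactly as stated. A second point needs care: $\mathcal{N}$ is not complex-differentiable because of the $|\cdot|^2$, so the Lipschitz argument must stay real (working in $\mathbb{R}^{2r}$), and I will not appeal to holomorphy.

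Finiteness of $L_R$ is immediate from the finite dimension. In particular $\|\Psi_r^\dagger\|_{\mathrm{op}}=1/\nu_r<\infty$, because the truncation \cref{eq:svd_threshold} retains only positive singular values. Since $\mathcal{F}$ is continuous and Lipschitz on every ball $\mathcal{B}_R$, it is locally Lipschitz on $\mathbb{C}^{1\times r}$. Picard--Lindel\"of then gives a unique local $C^1$ solution starting from $C_0$. The standard continuation argument extends it to a unique maximal solution on $[0,T_{\max})$ with $T_{\max}>0$, and if $T_{\max}<\infty$ then $\|C(t)\|_2\to\infty$ as $t\to T_{\max}$.
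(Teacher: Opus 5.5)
Your proposal is correct and follows the paper's proof essentially step for step. You use the same splitting $\mathcal{F}=\mathcal{L}+\mathcal{N}$ and the operator-norm bound for $\mathcal{L}$. You then use the entrywise cubic estimate with constant $3$, together with $\|\cdot\|_\infty\le\|\cdot\|_2$ and $\|C\Psi_r\|_2\le\|C\|_2\|\Psi_r\|_{\mathrm{op}}$, followed by right-multiplication by $\Psi_r^\dagger$ and Picard--Lindel\"of. Your explicit identity for $|u|^2u-|v|^2v$ and your continuation argument for the maximal interval fill in details the paper asserts without proof.
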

\begin{proof}
Since $X$ is a finite collocation set, all quantities entering $\mathcal{A}_{\rho}$ are bounded. Hence, \cref{eq:op_L} gives $\|\mathcal{L}(C_1)-\mathcal{L}(C_2)\|_{2}\leq\|\mathcal{L}\|_{\mathrm{op}}\|C_1-C_2\|_{2}$ with $\|\mathcal{L}\|_{\mathrm{op}}<\infty$, where $\|\cdot\|_{2}$ is the Euclidean norm and $\|\cdot\|_{\mathrm{op}}$ the induced spectral operator norm. 

For the nonlinear part, write $U_k:=C_k\Psi_r\in\mathbb{C}^{1\times N_c}$, $k\in\{1,2\}$. For any $C_k\in\mathcal{B}_R$, we have
$
\|U_k\|_{\infty}\leq\|U_k\|_{2}\leq\|C_k\|_{2}\,\|\Psi_r\|_{\mathrm{op}}\leq R\,\|\Psi_r\|_{\mathrm{op}}.$ 
Then, the elementary inequality $\big||z_1|^{2}z_1-|z_2|^{2}z_2\big|\leq 3\max(|z_1|,|z_2|)^{2}|z_1-z_2|$, applied entry-wise, gives
\[
\big\||U_1|^{2}\odot U_1-|U_2|^{2}\odot U_2\big\|_{2}\leq 3\max\{\|U_1\|_{\infty},\|U_2\|_{\infty}\}^{2}\|U_1-U_2\|_{2}\leq 3R^{2}\|\Psi_r\|_{\mathrm{op}}^{3}\|C_1-C_2\|_{2}.
\]
Substituting it into \cref{eq:op_N} yields $$\|\mathcal{N}(C_1)-\mathcal{N}(C_2)\|_{2}\leq 3|\beta|\,\|\rho(X)^{2}\|_{\infty}\,\|\Psi_r^{\dagger}\|_{\mathrm{op}}\,\|\Psi_r\|_{\mathrm{op}}^{3}R^{2}\|C_1-C_2\|_{2}.$$
%with $\|\cdot\|_{\infty}$ the max-norm over collocation entries.
Combining the two bounds gives $L_R$. Local existence and uniqueness of $C$ then follow from the Picard--Lindel\"of theorem.
\end{proof}

\begin{theorem}[Uniform boundness]\label{thm:global_wp}
Assume that $\Psi_r$ has the full row rank and $W_\rho$ in \cref{eq:Wrho} is positive definite (equivalently, $\omega_n>0$ for all $n=1,\ldots,N_c$). The sequence $\{C^{n}\}_{n\geq 0}\subset\mathbb{C}^{1\times r}$ produced by \cref{alg:sdfsnn}  %applying, at every time step, either the mass projection \cref{eq:post_norm} alone (equivalently, Step 1 in \cref{eq:step1_mass}) or the two-step mass--energy projection \cref{eq:step1_mass}--\cref{eq:energy_projection} to one RK step of \cref{eq:ode_abstract}, starting from a mass-normalized $C^{0}$. 
%Then $C^{n}G(C^{n})^{*}=1$ for all $n\geq 0$ and
satisfies
\begin{equation}\label{eq:apriori_bound}
    \sup_{n\geq 0}\|C^{n}\|_{2}\;\leq\;\frac{1}{\sqrt{\lambda_{\min}(G)}}\;<\;\infty,
\end{equation}
where $G=\Psi_r W_\rho \Psi_r^{\top}\in\mathbb{R}^{r\times r}$, and so the bound is independent of $\Delta t,\,n$, the order of the solver and the sequence of dimension batches $\{J_n\}$.
\end{theorem}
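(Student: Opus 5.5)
The plan is to show that every iterate lies on the discrete mass sphere $\{C: CGC^*=1\}$. Then we convert this quadratic constraint into a Euclidean bound using the smallest eigenvalue of $G$. No property of the ODE step enters beyond the fact that the output is finite and nonzero. This is exactly why the bound is independent of $\Delta t$, $n$, the solver order and the batches $\{J_n\}$.

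\textbf{Step 1: $G$ is real symmetric positive definite.} Since $W_\rho$ is diagonal with entries $\omega_n\rho(\bm{x}_n)^2>0$, for any $z\in\mathbb{C}^{1\times r}$ we have
\[
zGz^*=(z\Psi_r)W_\rho(z\Psi_r)^*=\sum_{n}\omega_n\rho(\bm{x}_n)^2|z\Psi_r(\bm{x}_n)|^2\ge 0 .
\]
Equality forces $z\Psi_r=0$, and full row rank of $\Psi_r$ then gives $z=0$. Hence $\lambda_{\min}(G)>0$. Because $G$ is Hermitian, the Rayleigh quotient gives $zGz^*\ge\lambda_{\min}(G)\|z\|_2^2$ for all complex $z$.

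\textbf{Step 2: every $C^n$ satisfies $C^nG(C^n)^*=1$.} We argue by induction.
\begin{itemize}
\item For $n=0$, this is the initial normalization in \Cref{alg:sdfsnn}.
\item For the step $n\to n+1$, the DOPRI5 output $C(t_{n+1})$ is rescaled by \cref{eq:step1_mass}. This gives $C^{(1)}G(C^{(1)})^*=1$ directly, provided $C(t_{n+1})\neq 0$; by Step 1 the denominator is then strictly positive.
\item If the optional energy step is skipped, or if $\sigma_E=0$, then $C^{n+1}=C^{(1)}$ and we are done.
\item Otherwise $C^{n+1}=\cos\theta^\star C^{(1)}+\sin\theta^\star\hat g$. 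We use the $G$-orthonormality relations $\hat gG\hat g^*=1$ and $C^{(1)}G\hat g^*=0$, which also give $\hat gG(C^{(1)})^*=0$ since $G$ is real symmetric. Expanding the quadratic form then yields $\cos^2\theta^\star+\sin^2\theta^\star=1$.
\end{itemize}
The Newton solve affects only the value of $\theta^\star$, not this identity, so its accuracy is irrelevant.

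\textbf{Step 3: conclusion.} Combining the two steps,
\[
1=C^nG(C^n)^*\ge\lambda_{\min}(G)\|C^n\|_2^2 ,
\]
which is \cref{eq:apriori_bound}. This bound holds for every realization of the batches and every step size.

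\textbf{Expected obstacles.} The main point needing care is well-definedness rather than estimation. First, the mass projection requires $C(t_{n+1})\neq 0$, and finite-time blow-up within a step must be excluded. For blow-up, we can invoke \Cref{lem:lipschitz} on the ball of radius $\lambda_{\min}(G)^{-1/2}$: a small enough (adaptive) step keeps the RK stages finite. A zero output is a measure-zero degeneracy, which we would state as implicitly excluded by the algorithm. Second, Step 2 needs the $G$-orthonormality of $\hat g$. We verify it once: $C^{(1)}Gv=C^{(1)}H(C^{(1)})^*=E_1$ gives $C^{(1)}G\hat g^*=0$, and $\hat gG\hat g^*=\sigma_E^{-2}(E_2-E_1^2)=1$.
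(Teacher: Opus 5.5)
Your proposal is correct and follows the paper's proof. You show $G=\Psi_rW_\rho\Psi_r^\top$ is positive definite from full row rank and positive weights, note that every iterate satisfies $C^nG(C^n)^*=1$ after the projection, and conclude with the Rayleigh-quotient bound $\lambda_{\min}(G)\|C^n\|_2^2\le 1$. You also supply details the paper only asserts: the explicit check that the energy step $\cos\theta^\star C^{(1)}+\sin\theta^\star\hat g$ stays on the mass sphere (via $\hat gG\hat g^*=1$ and $C^{(1)}G\hat g^*=0$) and the nonzero-denominator caveat. Your blow-up concern is unnecessary, however, because an explicit DOPRI5 step of the globally defined cubic vector field always returns a finite value.
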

\begin{proof}
By assumption, $W_\rho$ is positive definite and $\Psi_r$ has full row rank, so $G=\Psi_r W_\rho\Psi_r^{\top}$ is real symmetric positive definite with $\lambda_{\min}(G)>0$. The projection enforces the mass normalization which gives $C^{n}G(C^{n})^{*}=1$ for every $n\geq 0$. The Rayleigh quotient inequality applied to $G$ then yields $\lambda_{\min}(G)\|C^{n}\|_{2}^{2}\leq C^{n}G(C^{n})^{*}=1$, and so
\[
\|C^{n}\|_{2}\;\leq\;\frac{1}{\sqrt{\lambda_{\min}(G)}},\qquad \forall\,n\geq 0,
\]
where the bound depends only on $\Psi_r$ and $W_\rho$.
\end{proof}

\begin{remark}[Stochastic-dimension setting]\label{rem:sd_stability}
The bound \cref{eq:apriori_bound} is path-wise: on the finite collocation set, the stochastic sampler $\widetilde{\mathcal{A}}_{\rho,J}$ is almost surely bounded, and the induced linear part $\widetilde{\mathcal{L}}(C):=i[\widetilde{\mathcal{A}}_{\rho,J}(C\Psi_r)-C\Psi_r\,\mathrm{diag}(V_d(X))]\Psi_r^{\dagger}$ is uniformly bounded along any sample path. Since $G$ is independent of the sampled subset, the same uniform bound holds for any realization of $\{J_n\}$.
\end{remark}

\section{Numerical experiments}
\label{sec:exp}
This section evaluates SD-FSNN on GPEs over unbounded domains, addressing accuracy  and long-time performance. 
To enable long-term numerical tests, we focus on the defocusing GPE. 
For tests in this section, we shall set in \eqref{eqn:GPE}: $\gamma_j=2.0$ in all directions of the harmonic oscillator $V_d$ and a normalized Gaussian initial data,
\begin{equation}\label{eq:psi0_gauss}
\psi_0(x_1,x_2,\ldots,x_d) = \pi^{-d/4}\,\exp\!\left(-\tfrac{1}{2}\sum_{j=1}^{d} x_j^2\right),
\qquad d=1,2,\ldots,8.
\end{equation}
Unless otherwise specified, the envelope $\rho$ for \eqref{ansatz} uses $\alpha=\frac12$.  
The test errors are computed on a uniform space-time grid over a bounded window. 
All experiments\footnote{Code and data are available at \url{https://github.com/liangzhangyong/SD-FSNN}.} use \texttt{float64} precision, with gradient-free methods run on a single CPU with 64 GB RAM and gradient-based methods on a single \texttt{NVIDIA} A100-SXM4-80GB GPU.

\subsection{Accuracy test}
\label{short_gpe}

%which is normalized to $\|\psi_0\|_{L^2(\mathbb{R}^d)}=1$ and decays as $\|\bm{x}\|\to\infty$. 
%The dimension $d$ enters only through the ambient coordinate count and the prefactor $\pi^{-d/4}$, so the same form serves 
We will begin with accuracy tests for $d=1,2,3$ with reference solutions computed by refined fourth-order time-splitting spectral (TSSP) method \cite{bao2005fourth}. The SD-FSNN is used with mass projection only, where the additional energy projection makes little difference on short time computations. %, and then we will perform the higher-dimensional runtime scaling test.
%We first use 1D GPEs to assess the accuracy of SD-FSNN.
%The Gaussian-weighted ansatz enforces exponential decay at infinity and avoids boundary artifacts induced by domain truncation.
% \vspace{-2em}
\begin{figure}[h!]
\centering
\includegraphics[width=\textwidth]{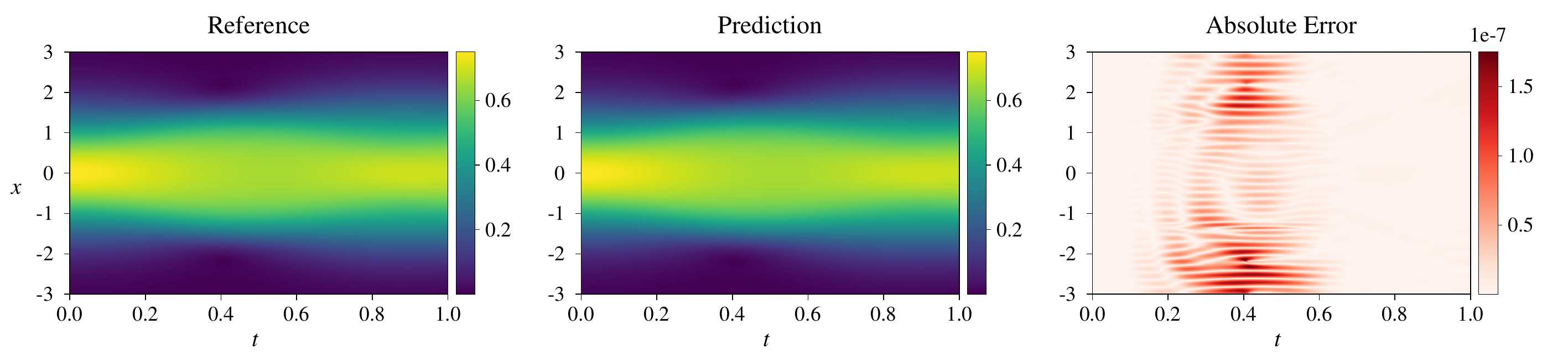}
\vspace{-2em}
\caption{Reference solution, SD-FSNN prediction, and pointwise absolute error in $|\psi|$ for the 1D GPE with $\beta=10$ on $t\in[0,1]$.}
\label{fig:gpe1d_beta10}
\end{figure}
\begin{figure}[h!]
\centering
\includegraphics[width=\textwidth]{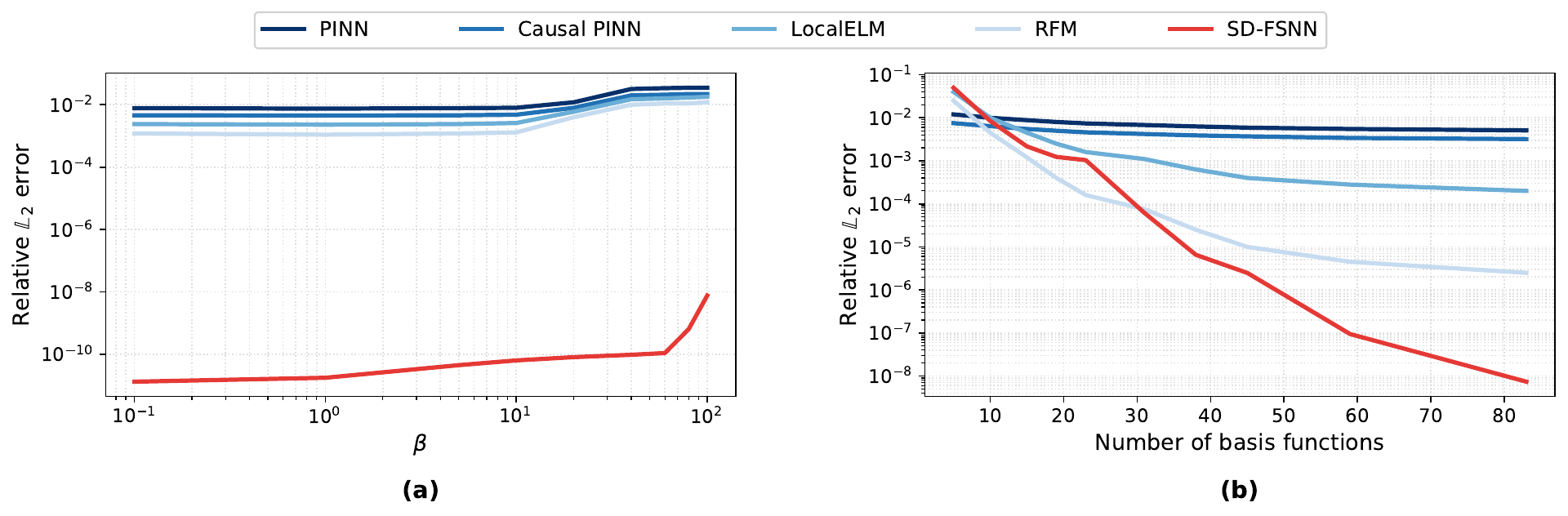}
\vspace{-2em}
\caption{Accuracy comparison between SD-FSNN and the baseline methods on 1D GPE benchmarks: (a) Relative $L^2$ error versus  $\beta$; (b) Relative $L^2$ error versus the number of basis functions.}
\label{fig:sdfsnn_errors}
\end{figure}

\Cref{fig:gpe1d_beta10} presents the space-time evolution of $|\psi|$ for the 1D GPE with $\beta=10$.
The SD-FSNN prediction agrees with the reference solution %and reproduces the localized norm profile near $x=0$ 
over $t \in [0,1]$, where  
the absolute error remains at the level of $10^{-7}$. %, with slightly larger values near the condensate center at later times.
These results indicate a valid approximation of GPE in the unbounded domain at high accuracy. Moreover, we compare the accuracy of SD-FSNN with PINN, Causal PINN, LocalELM, and RFM. 
\Cref{fig:sdfsnn_errors}(a) reports the relative $L^2$ error as a function of $\beta \in [10^{-1},10^{2}]$.
For $\beta \leq 10$, the baselines have errors in the range $10^{-3}$--$10^{-1}$.
Their errors increase for stronger interactions. SD-FSNN remains several orders of magnitude more accurate and stays below $10^{-8}$ at $\beta=100$. %This behavior is consistent with the Gaussian envelope and the closed-form two-step mass--energy projection used in SD-FSNN. 
\Cref{fig:sdfsnn_errors}(b) shows convergence with respect to the number of basis functions.
PINN and Causal PINN show slow convergence, while LocalELM and RFM exhibit early saturation.
By concentrating frozen features near the active region, SD-FSNN reaches errors of order $10^{-8}$ with fewer basis functions than the compared methods.

\begin{figure}[t!]
\centering
\includegraphics[width=\textwidth]{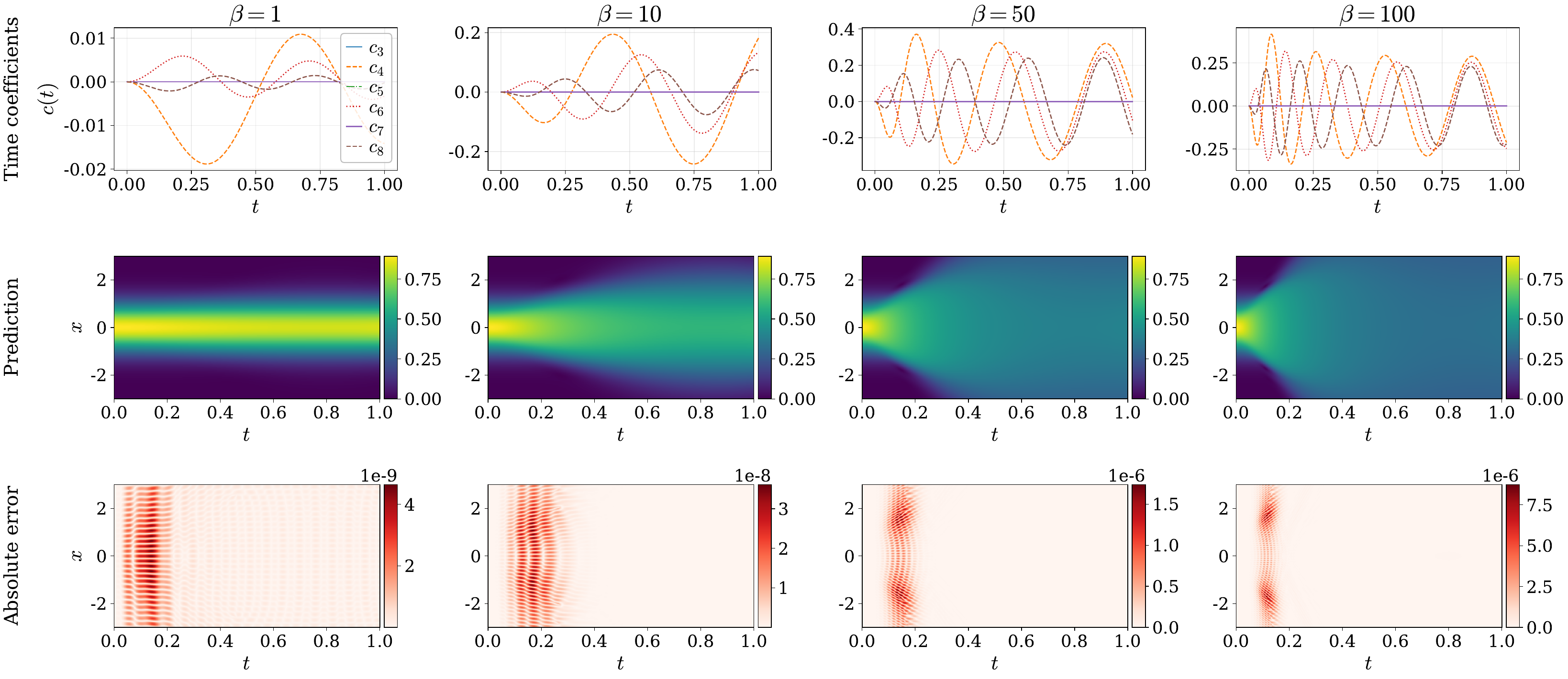}
\vspace{-2em}
\caption{Coefficient dynamics, SD-FSNN predictions, and pointwise absolute errors for $\beta\in\{1,10,50,100\}$.}
\label{fig:gpe_ct_multi_beta}
\end{figure}
\begin{figure}[t!]
\centering
\includegraphics[width=\textwidth]{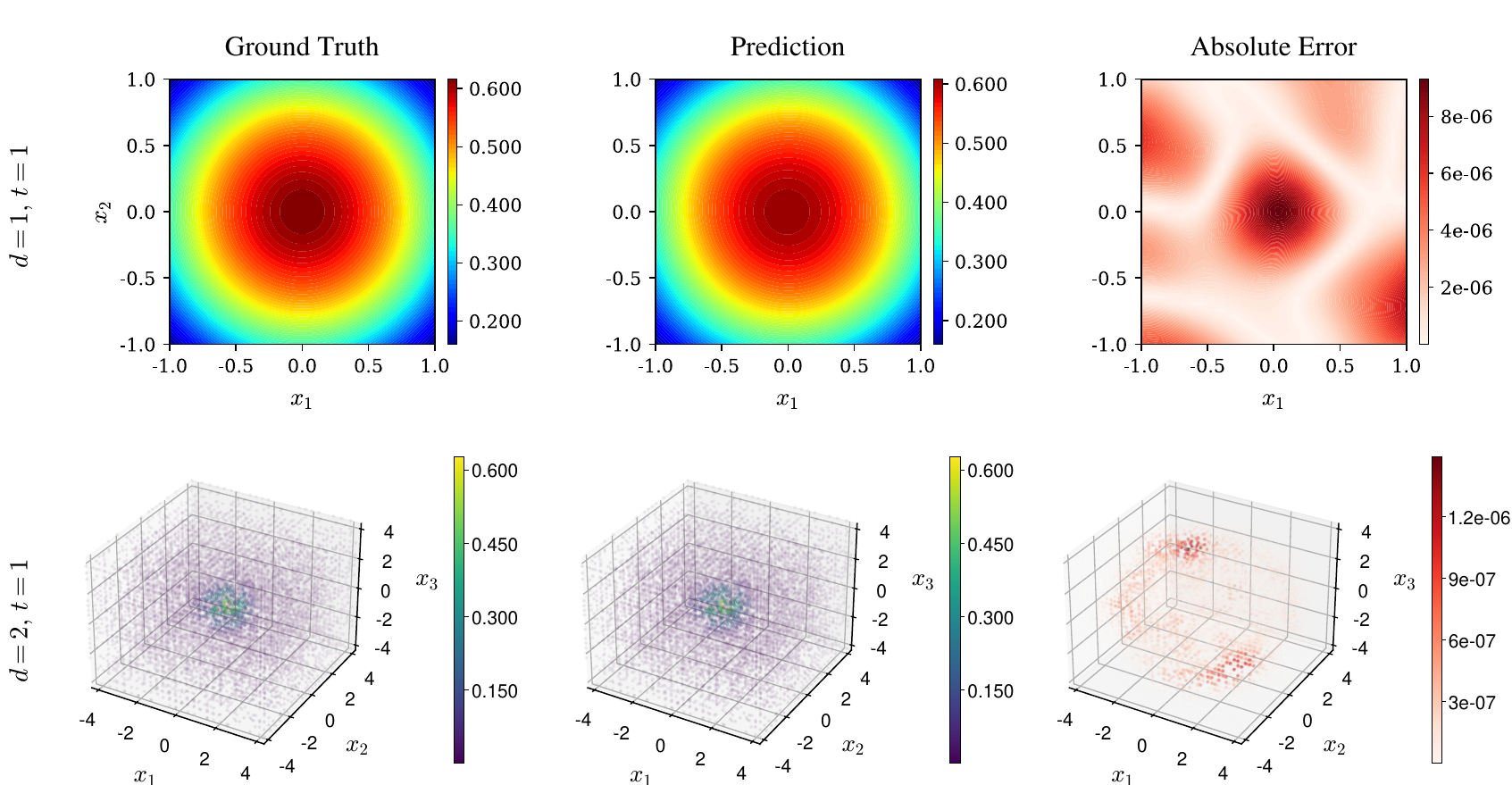}
\vspace{-2em}
\caption{Comparison of the reference solution, SD-FSNN prediction, and pointwise absolute error of $|\psi|$ for the 2D (1st row) and 3D (2nd row) GPEs with $\beta=10$.}
\label{fig:gpe_2d_beta10_T1}
\end{figure}

\Cref{fig:gpe_ct_multi_beta} visualizes the coefficient dynamics and SD-FSNN solutions for $\beta\in\{1,10,50,100\}$, where 
the first row plots the coefficients  $|c_3(t)|,\ldots,|c_8(t)|$ on $t\in[0,1]$.
%The displayed indices are restricted to $3$--$8$ so that the selected oscillatory modes can be distinguished.
%The second row shows the corresponding SD-FSNN predictions on the $(t,x)$ plane, and the third row shows the pointwise absolute errors. 
As can be seen, for $\beta=1$, the selected coefficients have small amplitudes and the predicted field remains localized near $x=0$.
As $\beta$ increases, the predicted field spreads more rapidly in space, and the selected coefficients exhibit larger amplitude and higher oscillations.
The error maps remain small for all tested $\beta$ values, indicating that SD-FSNN can work from weak to strong nonlinear regime.

We further demonstrate the high accuracy on 2D and 3D GPEs. 
%This test features nontrivial spatial spreading and interference patterns on an unbounded domain.
%The space-time separated representation and randomized spatial basis allow SD-FSNN to approximate the 2D wave function without a tensor-product grid.
\Cref{fig:gpe_2d_beta10_T1} shows the snapshot of $|\psi|^2$ at $t=1$.
The SD-FSNN prediction agrees with the reference solution in both cases, and the spatial error remains on the order of $10^{-6}$. 
The predictions match the references in support and amplitude, including the high-density core and decay toward the trap tails. 
These results again show that SD-FSNN can accurately capture the dynamics of GPE in the whole space.
%The largest pointwise errors occur near the center, where the density is largest.
%The peak error is of order $10^{-8}$ at $t=0$ and grows to the order of $10^{-6}$--$10^{-5}$ by $t=1$.
%No numerical instability are detected in all the tests. 

%This indicates stable short-time approximation of 2D condensate dynamics.

%\Cref{fig:gpe_3d_beta10_T1} shows subsampled points in $[-4,4]^3$ at $t \in \{0, 0.3, 0.6, 1\}$.

%\begin{figure}[htbp]
%\centering
%\includegraphics[width=\textwidth]{figs/gpe_3d_beta10_T1.pdf}
%\vspace{-1em}
%\caption{Comparison of the reference solution, SD-FSNN prediction, and pointwise absolute error of $|\psi|$ for the 3D GPE with $\beta=10$.}
%\label{fig:gpe_3d_beta10_T1}
%\end{figure}

Note that the classical tensor-product Hermite spectral discretizations in TSSP for references become very expensive in memory and computation. 
The proposed stochastic-dimension sampling in SD-FSNN avoids forming the full tensor-product basis while retaining an explicit whole-space representation.  
To assess dimensional scaling, we compare the runtime of SD-FSNN with TSSP  \cite{bao2005fourth} for $d=1,\ldots,8$.
\Cref{fig:run_time_tssp_sdfsnn} presents the computational time on a logarithmic scale.
For TSSP, we test spatial mesh size $h \in \{0.5, 0.25, 0.125, 0.0625\}$ with a fixed temporal step size $\Delta t = 10^{-3}$.
As the dimension increases, the TSSP cost grows exponentially: for all four tested mesh sizes, the wall-clock time at $d=8$ reaches $10^5$--$10^6$\,s.
Even with the coarsest mesh $h=0.5$, TSSP requires over $10^4$\,s already at $d=7$.
In contrast, SD-FSNN maintains a nearly flat cost curve, staying below $15$\,s for both $t=1$ and $t=10$.
At $d=8$ with $t=10$, SD-FSNN is four to five orders of magnitude faster than TSSP.
%This speedup has two sources.
%First, the stochastic-dimension structure avoids high-dimensional tensor-product grids.
%Second, the space-time separation reduces time integration to a coefficient ODE system.
These results indicate the alleviation of the dimensional scaling that limits traditional spectral methods for whole-space GPEs.

\begin{figure}[t!]
\centering
\includegraphics[width=.85\textwidth]{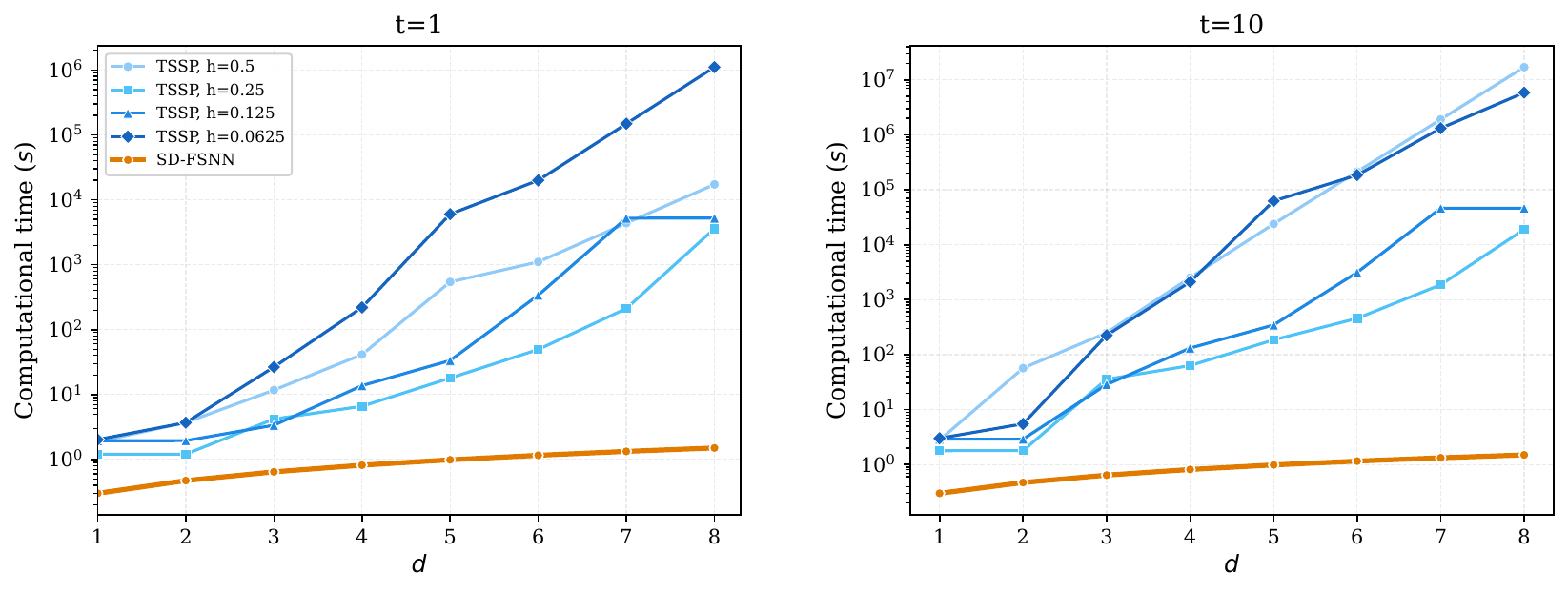}
\vspace{-1em}
\caption{Computational time versus spatial dimension $d$ for TSSP and SD-FSNN.}
\label{fig:run_time_tssp_sdfsnn}
\end{figure}

\subsection{Long-time performance}
\label{long_gpe}
We now investigate the performance of SD-FSNN over extended temporal domains by solving \eqref{eqn:GPE}   in 1D with $\beta=10$ and $\beta=30$ for $t\in[0,100]$ using SD-FSNN.  To address the importance of energy-projection in long-term case, we test SD-FSNN with both the full structure-preserving projection version and its mass-normalization-only variant.

% \cref{fig:gpe_sdre_T100_longtime} shows the SD-FSNN prediction, reference and pointwise absolute error on the $(t,x)$ plane.
% The pointwise error remains at the level of $10^{-7}$ and exhibits no rapid growth.
% \Cref{fig:gpe_sdre_T100_error} further quantifies the error in time. From the numerical results, we can observe the following. 
% After a brief initial transient from double-precision roundoff to roughly $10^{-7}$, the relative $L^2$ error plateaus and stays around order $10^{-6}$ through $T=100$ without exponential growth. 
% The effect of the energy projection depends on the interaction strength. 
% For the linear case $\beta=0$, the Hamiltonian is quadratic, and the mass-normalized trajectory already preserves this quadratic energy to high accuracy. 
% Therefore, adding the energy projection produces only negligible changes in the relative-error curve. 
% In contrast, for $\beta=10$, the Hamiltonian contains the nonlinear quartic interaction term, which is not controlled by mass normalization alone.
% Removing the energy projection then leads to a mild but visible increase in the long-time error.
\Cref{fig:gpe_sdre_T100_longtime} shows the SD-FSNN prediction, reference, and pointwise absolute error on the $(t,x)$ plane.
For $\beta=10$, the pointwise error stays at the level of $10^{-7}$, and for the case $\beta=30$ it remains at the level of $10^{-5}$. Indeed, stronger stiffness can induce a greater error, which is expected, while in both cases, no rapid growth is observed over $t\in[0,100]$.
\Cref{fig:gpe_sdre_T100_error} further quantifies the relative $L^2$ error in time.
After a brief initial transient driven by double-precision roundoff, the error of the full mass$+$energy projection variant quickly enters a plateau regime.
The plateau sits on the order $10^{-6}$ for $\beta=10$ and  $10^{-5}$ for $\beta=30$. 
%Through $T=100$, the error then exhibits at most an algebraic, sub-linear drift rather than exponential growth.
The SD-FSNN with mass-projection-only
is still working in long-time computing, while  
by comparison, 
the proposed full projection for SD-FSNN makes a clear difference. 
It can be seen that the energy projection brings about an improvement of one or two orders of magnitude. 
%For the strongly nonlinear case $\beta=30$, the quartic term is no longer controlled by mass normalization.
%Removing the energy projection then leads to a clearly larger long-time error.
%This error is roughly two orders of magnitude above the full variant.

These results indicate two points.
First, space-time separation with structure-preserving projections suppresses long-horizon error accumulation in this test. In contrast, 
standard global-in-time neural network solvers that lack causal time marching often do not share this property \cite{krishnapriyan2021characterizing,wang2021understanding}. 
Second, the additional energy projection indeed further benefits the long-term accuracy, especially for strong nonlinearity.

% These results suggest that space-time separation combined with structure-preserving projections suppresses long-horizon error accumulation in this test. 
% The comparison with the mass-only variant further indicates that the energy projection is most beneficial in the nonlinear regime. 
% In the linear case, its influence is almost invisible because the underlying quadratic invariant is already well preserved. 
% Global-in-time neural solvers that lack causal time marching do not share this property. 
\begin{figure}[t!]
\centering
\includegraphics[width=\textwidth]{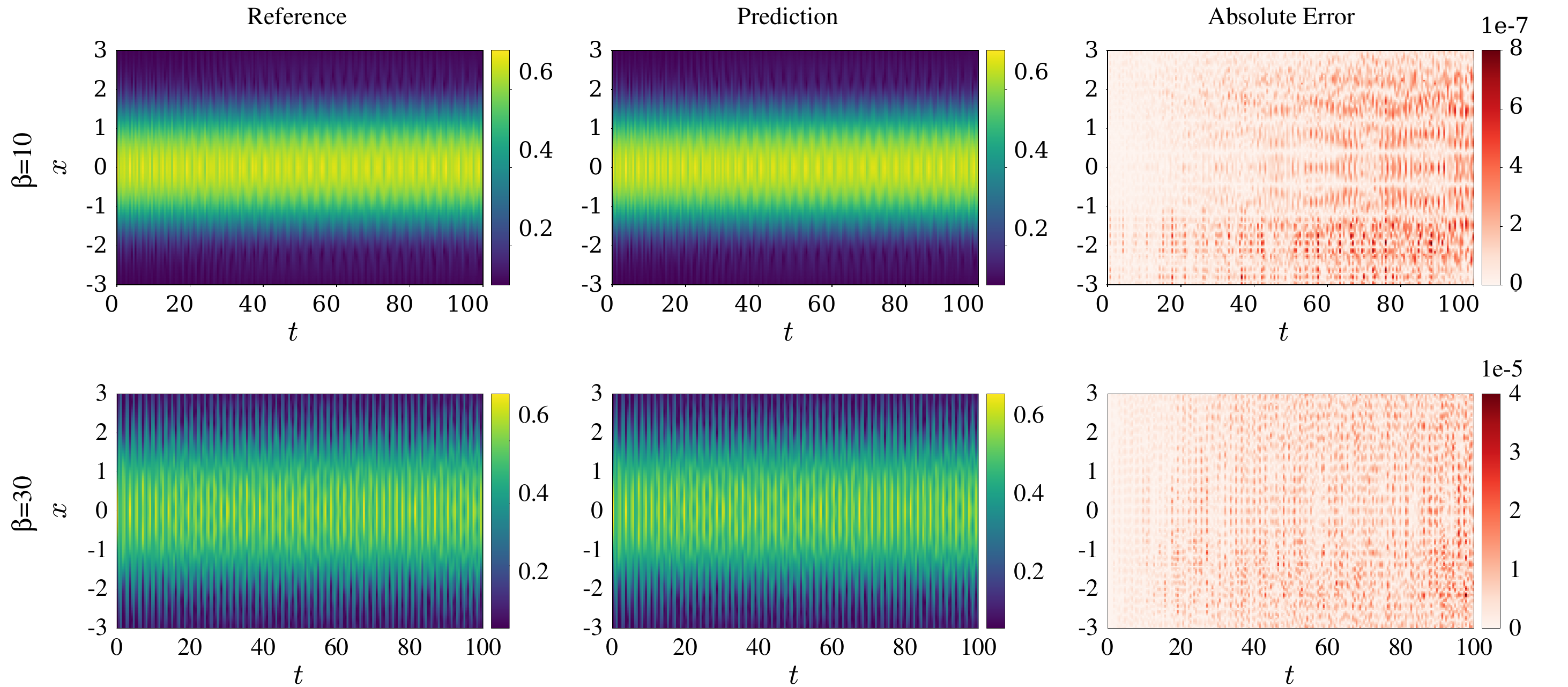}
\vspace{-2em}
\caption{SD-FSNN long-time prediction, reference solution, and pointwise absolute error for the 1D GPE on $t\in[0,100]$ with $\beta=10$ (1st row) and $\beta=30$ (2nd row).}
\label{fig:gpe_sdre_T100_longtime}
\end{figure}
\begin{figure}[t!]
\centering
\includegraphics[width=.9\textwidth]{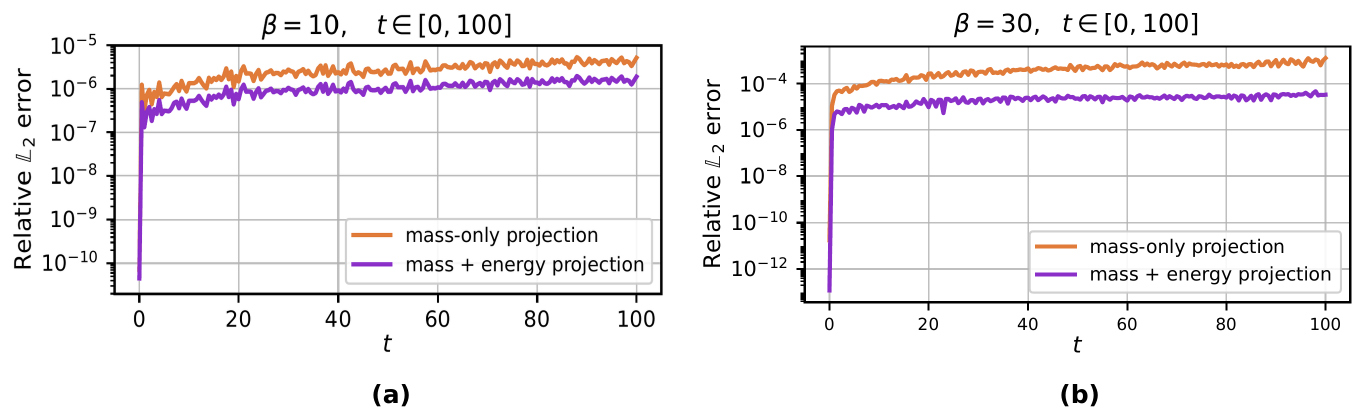}
\vspace{-1em}
\caption{Relative $L^2$ error of the SD-FSNN long-time prediction for the 1D GPE on $t\in[0,100]$ with $\beta=10$ and $\beta=30$. The comparison also illustrates the effect of removing the energy projection from the structure-preserving correction.}
\label{fig:gpe_sdre_T100_error}
\end{figure}

\subsection{Ablation study}\label{sec:ablation}
We perform ablation studies on the key ingredients proposed for our SD-FSNN method: decay envelope, stochastic-dimension sampling, and structure-preserving projections introduced in \Cref{sec:sdfsnn}. Tests here are done on the same 1D GPE with $\beta=10$. 

\paragraph{\textbf{Effect of the exponential decay strategy}}
We first assess the decay envelope introduced in \cref{ansatz}.
SD-FSNN uses $\rho(\bm{x})$ to impose $\lvert \psi(\bm{x}, t)\rvert \to 0$ as $\lVert \bm{x}\rVert \to \infty$.
\Cref{fig:ablation_decay}(a) reports the relative $L^2$ error as a function of the number of basis functions for several decay rates $\alpha$.
Without the envelope, the error stagnates near $\mathcal{O}(1)$.
The decay-matched value $\alpha=0.5$ decreases the error to approximately $10^{-8}$ using $512$ basis. The nearby values can also work, but less effectively.  
\Cref{fig:ablation_decay}(b) shows the de-enveloped initial data $\rho^{-1}\psi_0$ under decay-matched $\alpha=0.5$.
Unlike tuned nearby values such as $\alpha=0.3$, this choice makes $\rho^{-1}\psi_0$ constant, so the network does not need to learn any residual spatial decay.
\Cref{fig:ablation_decay}(c) shows the temporal error.
The ansatz without decay becomes unstable, whereas the decay-envelope model remains accurate over the tested interval.

% \paragraph{\textbf{Effect of the exponential decay strategy}}
% We first assess the decay envelope introduced in \cref{ansatz}.
% SD-FSNN uses $\rho(\bm{x})$ to impose $\lvert \psi(\bm{x}, t)\rvert \to 0$ as $\lVert \bm{x}\rVert \to \infty$.
% \Cref{fig:ablation_decay}(a) reports the relative $L^2$ error as a function of the number of basis functions for several decay parameters $\alpha$.
% Without the envelope, the error stagnates near $\mathcal{O}(1)$.
% With $\alpha=0.5$, the error decreases to approximately $10^{-8}$ using $512$ basis functions.
% \Cref{fig:ablation_decay}(b) shows the temporal error.
% The ansatz without decay becomes unstable, whereas the decay-envelope model remains accurate over the tested interval.

\begin{figure}[t!]
\centering
\includegraphics[width=\textwidth]{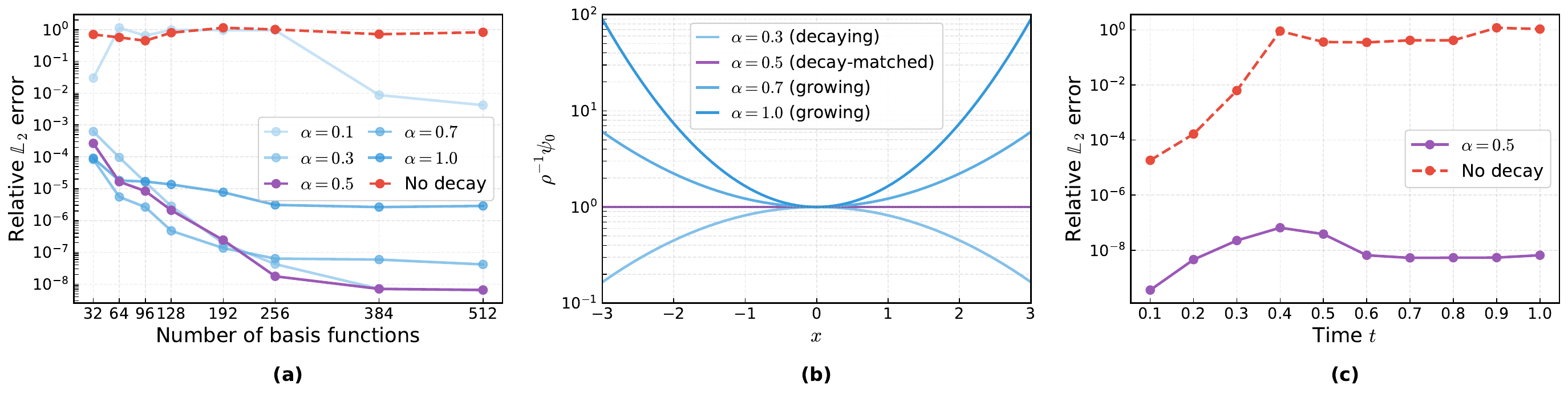}
\vspace{-2em}
\caption{Effect of the exponential decay strategy. (a) Error convergence for different $\alpha$. (b) De-enveloped initial data $\rho^{-1}\psi_0$, with $\alpha=0.5$ decay-matched. (c) Temporal error with and without the decay envelope.}
\label{fig:ablation_decay}
\end{figure}
\begin{figure}[t!]
\centering
\includegraphics[width=.9\textwidth]{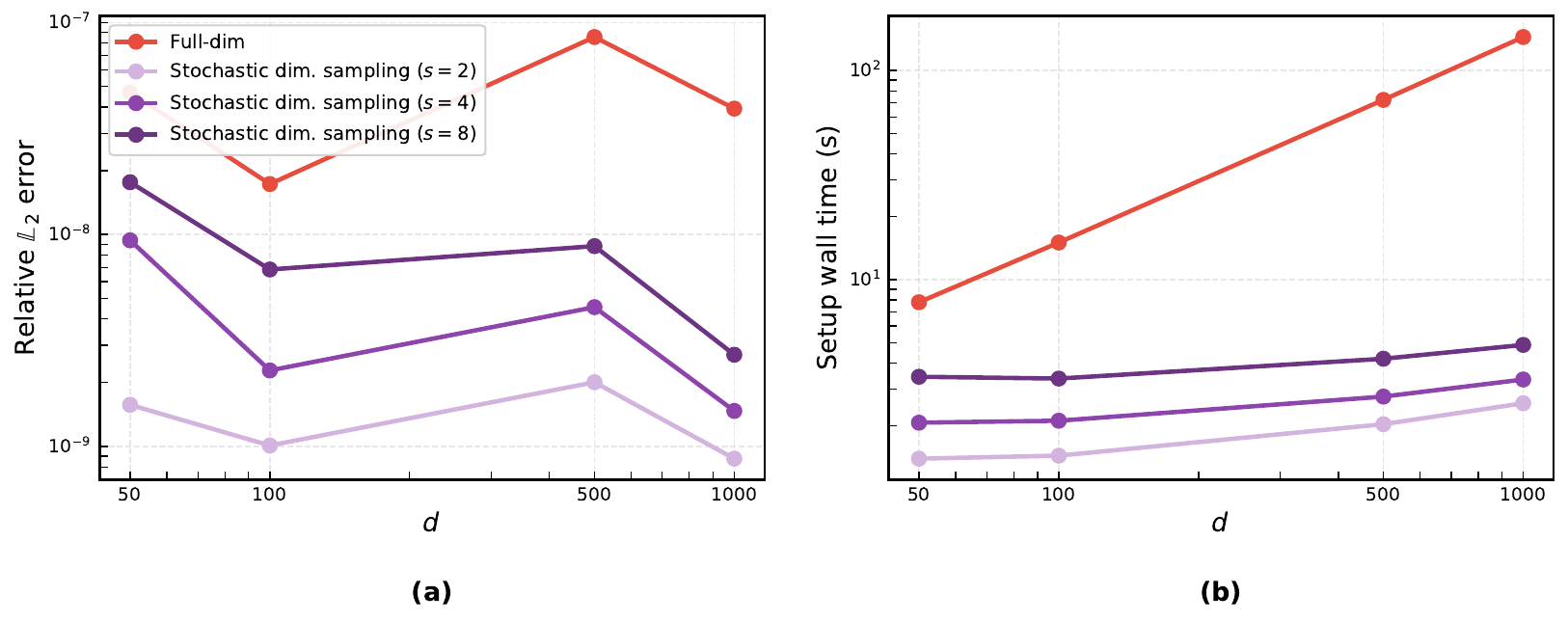}
\vspace{-1em}
\caption{Effect of stochastic-dimension sampling. (a) Relative $L^2$ error and (b) setup wall-clock time for full-dimension evaluation and stochastic sampling with different subset sizes $s$.}
\label{fig:ablation_sd}
\end{figure}
\begin{figure}[t!]
\centering
\includegraphics[width=0.9\textwidth]{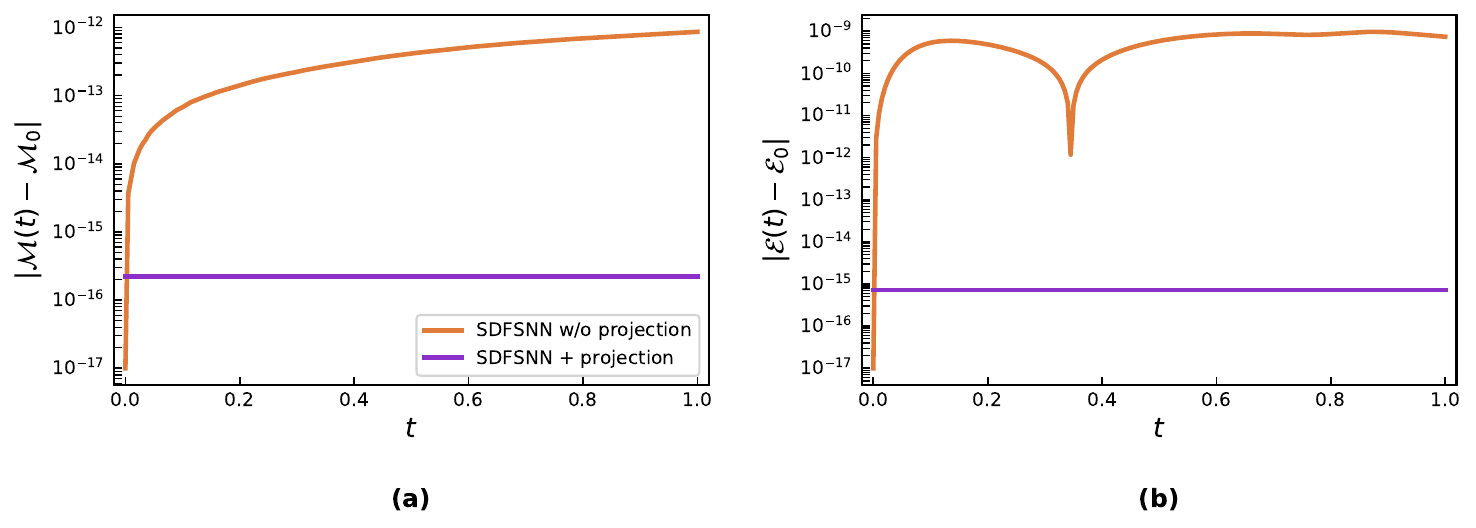}
\vspace{-1em}
\caption{Effect of the structure-preserving projections. (a) Deviation of the discrete mass, $|\mathcal{M}(t)-\mathcal{M}_0|$. (b) Deviation of the discrete energy, $|\mathcal{E}(t)-\mathcal{E}_0|$.}
\label{fig:ablation_conservation}
\end{figure}

\paragraph{\textbf{Effect of stochastic-dimension sampling}}
We then assess the accuracy--cost trade-off from stochastic-dimension sampling by comparing it with full-dimension evaluation for different subset sizes $m$.
\Cref{fig:ablation_sd}(a) shows the relative $L^2$ error for $d \in \{50,100,500,1000\}$.
Stochastic sampling preserves accuracy in these tests and can yield smaller errors than the full-dimension baseline, in line with the conditional unbiasedness in \Cref{thm:sdfsnn_unbiased_gpe} and the variance estimate in \Cref{lem:sdfsnn_variance}.
\Cref{fig:ablation_sd}(b) reports the setup wall-clock time.
The full-dimension cost increases with $d$, whereas stochastic sampling keeps setup time nearly flat and smaller, matching the $\mathcal{O}(Ms)$ per-evaluation cost predicted by \cref{eq:sdfsnn_laplacian}.

\paragraph{\textbf{Effect of structure-preserving projections}}
We next examine the closed-form two-step mass and energy projection in \Cref{fig:ablation_conservation}.
\Cref{fig:ablation_conservation}(a) reports the deviation of the discrete mass $\mathcal{M}(t)$ \eqref{eq:mass_disc} from its initial value $\mathcal{M}_0$.
\Cref{fig:ablation_conservation}(b) reports the corresponding deviation of the discrete energy $\mathcal{E}(t)$ \eqref{eq:energy_disc} (replace $\mathcal{E}(C(t))$ for short).
After the mass projection step \eqref{eq:step1_mass}, the mass error $|\mathcal{M}(t)-\mathcal{M}_0|$ stays at the level of double-precision roundoff ($10^{-16}$).
%Here $\mathcal{M}_0=1$, and the observed error is about $10^{-16}$.
The energy error $|\mathcal{E}(t)-\mathcal{E}_0|$ with $\mathcal{E}_0$ the initial value, is maintained similarly near the roundoff, about $10^{-15}$. 
These results confirm that the projection enforces the prescribed discrete invariants and suppresses non-physical drift over time.

\section{More exploration and comparison}
\label{sec:discussion}
This section explores the performance of SD-FSNN beyond low-dimensional tests. We first apply the method to extremely high-dimensional benchmarks and compare its performance with existing representative solvers. Then, we use the Kolmogorov $n$-width perspective \cite{peherstorfer2022breaking} to clarify when the frozen-basis representation remains effective and where its limitations may arise.

\subsection{Extremely high dimension}
We manufacture an exact solution $\psi_{\text{ex}}(\bm{x})$ for the high-dimensional nonlinear equation  
by adding a source term $g(\bm{x})$ to the original GPE (see also \cite{hu2024tackling}): %MMS yields a modified problem whose solution is known by construction.
%All methods solve the same problem on the unit ball $\mathbb{B}^d$ with Dirichlet data $\psi_{\text{exact}}$ and source $g(\bm{x})$. 
%SD-FSNN uses the isotropic Gaussian envelope matched to \cref{eqn:gpe_exact}, so the de-enveloped variable equals $\mathcal{P}(\bm{x})$.
%The PINN-type and tensor-network baselines retain their native parameterizations.
%We prescribe the exact solution as
\begin{equation}\label{eqn:gpe_exact}
\begin{split}
    &\psi_{\text{ex}}(\bm{x}) =  \left(\prod_{j=1}^{d}\frac{\gamma_j}{\pi}\right)^{1/4}\mathrm{e}^{-\frac{1}{2}\sum_{j=1}^{d}\gamma_j x_j^2}\mathcal{P}(\bm{x}),\\
    &\mathcal{P}(\bm{x})=\left[ 1 + \epsilon \sum_{i=1}^{d-1} \xi_i \sin\left(x_i + \cos(x_{i+1}) + x_{i+1}\cos(x_i)\right)\right],
    \end{split}
\end{equation}
that solves  
the manufactured (stationary) problem  
% Given \cref{eqn:gpe_exact}, the corresponding manufactured stationary problem is posed on the unit ball $\mathbb{B}^d=\{\bm{x}\in\mathbb{R}^d:\|\bm{x}\|<1\}$. The boundary data on $\partial\mathbb{B}^d$ are prescribed by $\psi_{\text{exact}}$:
\begin{equation}\label{eqn:gpe_mms}
% \left\{
% \begin{aligned}
\frac{1}{2}\nabla^2\psi = V_d(\bm{x})\psi + \beta|\psi|^2\psi + g(\bm{x}), \qquad \bm{x}\in \mathbb{R}^d,
% \end{aligned}
% \right.
\end{equation}
where $g(\bm{x}) = \frac{1}{2}\nabla^2\psi_{\text{ex}} - V_d(\bm{x})\psi_{\text{ex}}-\beta|\psi_{\text{ex}}|^2\psi_{\text{ex}}$. We take $\beta=10$, $\gamma_j=2$ for $1\leq j\leq d$,  $\epsilon=10^{-3}$ and $\xi_i$ for $1\leq i\leq d-1$ generated from $\mathcal{N}(0, 1)$.
% Here $g(\bm{x}) = \frac{1}{2}\nabla^2\psi_{\text{exact}} - V_d(\bm{x})\psi_{\text{exact}}-\beta|\psi_{\text{exact}}|^2\psi_{\text{exact}}$.
Using this manufactured problem, we compare SD-FSNN with eight baselines from the high-dimensional PDE literature. These are SDGD \cite{hu2024tackling}, RS-PINNs \cite{hu2025smooth}, HTE \cite{hu2024hte}, STDE \cite{shi2024stde}, ForLap \cite{li2024forlap}, Score-PINNs \cite{hu2025score}, TNN \cite{wang2022tensor}, and TNN-RBF \cite{wang2025tensor}. 
For baseline methods on a bounded domain, the GPE is solved on $\mathbb{B}^d = \{\bm{x}\in\mathbb{R}^d: \|\bm{x}\|<1\}$ with Dirichlet boundary data
$
\psi(\bm{x}) = \psi_{\text{ex}}(\bm{x}), \quad \bm{x}\in \partial \mathbb{B}^d.$ 
%while the exact solution itself is defined on $\mathbb{R}^d$ with exponential decay at infinity.
All methods except TNN and TNN-RBF use stochastic-dimension sampling with $s=4$. 
The numerical results are given in \Cref{tab:gpe_static_results}, where $N_f$ is the number of residual samples per iteration and $N_c$ is the number of collocation points used for the least-squares system (TNN-RBF and SD-FSNN).

The quantitative comparison in \Cref{tab:gpe_static_results} shows that SD-FSNN achieves the lowest errors and the shortest wall-clock time across the reported tests.
In particular, it achieves the lowest $L^2$ and $L^1$ errors for $d \in \{10,100,1000\}$, with an $L^2$ error of $3.45\times10^{-4}$ at $d=1000$.
By contrast, the randomized-estimator, score-based, and tensor-network baselines remain less accurate. Several show increasing errors as the ambient dimension grows.
This indicates that the frozen sampled basis and space-time separated coefficient evolution remain effective in the high-dimensional regime; the conditional unbiasedness in \Cref{thm:sdfsnn_unbiased_gpe} ensures that the dimension-batch sampler introduces no systematic bias as $d$ grows, while the variance prefactor in \cref{eq:sdfsnn_variance} (\Cref{lem:sdfsnn_variance}) controls the residual sampling fluctuation.
Concerning the computational cost, 
 SD-FSNN takes $12.18$\,s at $d=1000$, while by comparison, ForLap requires $259.17$\,s, TNN-RBF $289.05$\,s, RS-PINNs $696.48$\,s, and SDGD, STDE, HTE, and Score-PINNs each exceed $1500$\,s.
Moreover, TNN and TNN-RBF exhibit wall-clock-time growth that is linear in $d$.
Thus, SD-FSNN is at least $21\times$ faster than the fastest baseline at $d=1000$. %It also attains the lowest errors among the methods in \Cref{tab:gpe_static_results}.

\begin{table}[t!]
\centering
\caption{Comparison of methods on the static high-dimensional GPE \eqref{eqn:gpe_mms}.}
\label{tab:gpe_static_results}
\scriptsize
\setlength{\tabcolsep}{8pt}
\begin{tabular}{ccccccc}
\toprule
Dim & Method & $N_f$ & $N_c$ & $L^2$ Error & $L^1$ Error & Time (s) \\
\midrule
10 & SDGD & 1000 & -- & $4.42\times10^{-3}$ & $3.14\times10^{-3}$ & 199.53 \\
10 & STDE & 1000 & -- & $4.42\times10^{-3}$ & $3.14\times10^{-3}$ & 198.11 \\
10 & HTE & 1000 & -- & $6.52\times10^{-3}$ & $4.54\times10^{-3}$ & 617.58 \\
10 & RS-PINNs & 1000 & -- & $5.48\times10^{-3}$ & $3.86\times10^{-3}$ & 140.32 \\
10 & ForLap & 1000 & -- & $2.33\times10^{-3}$ & $1.70\times10^{-3}$ & 154.25 \\
10 & Score-PINNs & 1000 & -- & $1.45\times10^{-2}$ & $1.16\times10^{-2}$ & 247.09 \\
10 & TNN & 1000 & -- & $6.04\times10^{-2}$ & $4.60\times10^{-2}$ & 25.67 \\
10 & TNN-RBF & 1000 & 128 & $3.49\times10^{-2}$ & $2.33\times10^{-2}$ & 22.44 \\
10 & \textbf{SD-FSNN} & 1000 & 128 & {\boldmath $2.41\times10^{-6}$} & {\boldmath $1.58\times10^{-6}$} & \textbf{2.12} \\
\midrule
100 & SDGD & 1000 & -- & $7.35\times10^{-3}$ & $5.31\times10^{-3}$ & 836.11 \\
100 & STDE & 1000 & -- & $7.35\times10^{-3}$ & $5.31\times10^{-3}$ & 848.18 \\
100 & HTE & 1000 & -- & $1.03\times10^{-2}$ & $8.25\times10^{-3}$ & 647.88 \\
100 & RS-PINNs & 1000 & -- & $8.44\times10^{-3}$ & $5.50\times10^{-3}$ & 381.23 \\
100 & ForLap & 1000 & -- & $2.32\times10^{-3}$ & $2.10\times10^{-3}$ & 1016.21 \\
100 & Score-PINNs & 1000 & -- & $5.17\times10^{-2}$ & $5.19\times10^{-2}$ & 342.74 \\
100 & TNN & 1000 & -- & $1.36\times10^{-2}$ & $1.01\times10^{-2}$ & 122.84 \\
100 & TNN-RBF & 1000 & 128 & $5.16\times10^{-3}$ & $3.55\times10^{-3}$ & 54.76 \\
100 & \textbf{SD-FSNN} & 1000 & 128 & {\boldmath $8.05\times10^{-5}$} & {\boldmath $6.98\times10^{-5}$} & \textbf{6.21} \\
\midrule
1000 & SDGD & 1000 & -- & $7.43\times10^{-3}$ & $5.87\times10^{-3}$ & 1573.35 \\
1000 & STDE & 1000 & -- & $7.43\times10^{-3}$ & $5.87\times10^{-3}$ & 2029.26 \\
1000 & HTE & 1000 & -- & $1.29\times10^{-2}$ & $6.33\times10^{-3}$ & 1638.89 \\
1000 & RS-PINNs & 1000 & -- & $1.93\times10^{-2}$ & $1.59\times10^{-2}$ &  1115.30\\
1000 & ForLap & 1000 & -- & $5.26\times10^{-3}$ & $4.82\times10^{-3}$ & 181.41 \\
1000 & Score-PINNs & 1000 & -- & $4.61\times10^{-1}$ & $4.41\times10^{-1}$ & 1571.62 \\
1000 & TNN & 1000 & -- & $1.63\times10^{-3}$ & $1.12\times10^{-3}$ & 979.35 \\
1000 & TNN-RBF & 1000 & 128 & $1.57\times10^{-3}$ & $1.19\times10^{-3}$ & 289.05 \\
1000 & \textbf{SD-FSNN} & 1000 & 128 & {\boldmath $3.45\times10^{-4}$} & {\boldmath $2.58\times10^{-4}$} & \textbf{12.18} \\
\bottomrule
\end{tabular}
\end{table}

\subsection{Kolmogorov n-width barrier}

The preceding observation that SD-FSNN remains efficient as the dimension grows needs to be viewed in light of the Kolmogorov $n$-width \cite{peherstorfer2022breaking}. Indeed, the spatial basis in the basic SD-FSNN is frozen, and only the output coefficients evolve.
Its approximation capacity is therefore limited by the Kolmogorov $n$-width of the target manifold, a known bottleneck for static randomized representations \cite{berman2024randomized,kast2024positional}.
For solution manifolds with high intrinsic complexity, frozen spectral or randomized reduced models may still require many degrees of freedom. 
This limitation also applies to the static-basis baselines in \Cref{fig:kolmogorov_nwidth}, including RFM, LocalELM, TNN, and GNN \cite{ainsworth2021gnn,ainsworth2025egnn} that fits radial-basis Gaussian centers.
The present results do not remove this barrier.

To probe pairwise structure separately from dimensional cost, we replace the previous $\tanh$ basis by a structured sparse Fourier basis,
\begin{equation*}%\label{eq:sparse_fourier}
\phi_m(\bm{x})=\cos(\bm{\omega}_m^\top \bm{x}_{S_m}+\eta_m),\quad |S_m|=2,
\end{equation*}
with $S_m\subset\{1,\dots,d\}$ a random coordinate pair and $(\bm{\omega}_m,\eta_m)$ random frequency/phase parameters. This basis encodes low-order pairwise interactions matched to $\mathcal{P}(\bm{x})$ in \cref{eqn:gpe_exact}. The rest of the SD-FSNN pipeline is unchanged, and we compute again the problems in \Cref{tab:gpe_static_results}.

For $d=10$ with the sparse Fourier basis, SD-FSNN reaches $1.60\times 10^{-10}$ at $M=4096$, while the tested baselines yield substantially larger errors.
With $M=150d$, the error remains at roughly $\mathcal{O}(10^{-8})$ for $d=5$ to $50$. The basis size grows linearly in $d$, so the error (not the cost) is dimension-insensitive.
Thus, in this structured setting, SD-FSNN alleviates the practical impact of the Kolmogorov $n$-width barrier.
It also avoids some of the nonconvex optimization difficulties associated with fully trainable PINNs \cite{krishnapriyan2021characterizing,wang2021understanding,wang2022and}.
For manifolds without such sparsity, periodically resampling the frozen basis (or extend via higher-order interaction terms) remains a natural extension.

\begin{figure}[t!]
\centering
\includegraphics[width=0.9\textwidth]{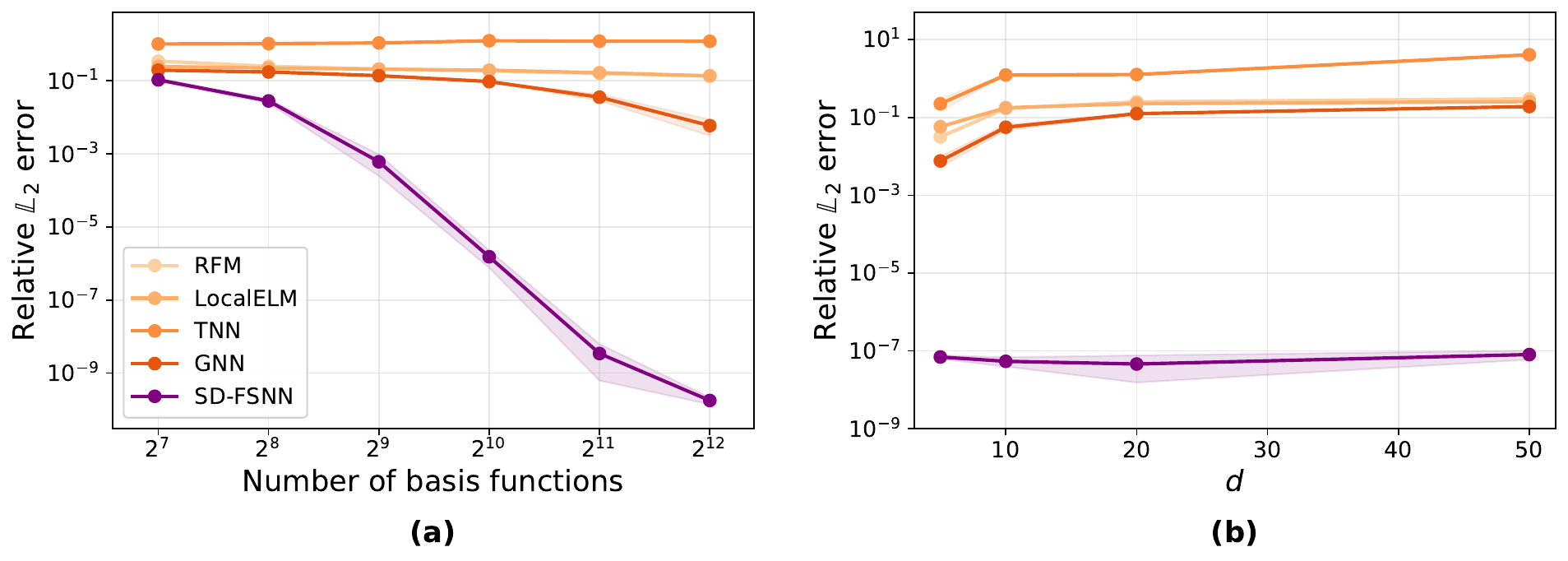}
\vspace{-1em}
\caption{Analysis of the Kolmogorov $n$-width barrier. (a) Relative $L^2$ error versus the number of basis functions $M$ at $d=10$. (b) Relative $L^2$ error versus dimension $d$ with $M$ scaled as $150d$.}
\label{fig:kolmogorov_nwidth}
\end{figure}

\section{Conclusion}
\label{sec:conclusion}
We introduced SD-FSNN, a randomized neural network framework for solving high-dimensional GPE on unbounded domain. The method tackles the curse-of-dimensionality through three key components: (i) a gradient-free, space-time separated architecture using a prescribed exponential-decay envelope and frozen features; (ii) a conditionally unbiased stochastic-dimension sampler that essentially reduces operator evaluation cost; and (iii) a closed-form projection preserving both mass and energy, ensuring long-term stability with theoretically proven uniform boundedness.
Numerical experiments on GPEs up to 1000 dimensions demonstrate that SD-FSNN achieves orders-of-magnitude higher accuracy and significantly lower computational cost compared to state-of-the-art methods. 
The method is generalizable to other high-dimensional evolution problems on unbounded domains, which will be addressed in future work.
%The numerical results suggest that high-dimensional GPEs on unbounded domains provide a useful benchmark for conditionally unbiased randomized high-dimensional solvers.
%The proposed space-time separation strategy facilitates accurate long-time dynamics in the tested regime.

%Potential limitations include the trade-off between Monte Carlo subset size, computational cost, and variance.
%The representational capability of the frozen basis may also diminish as the nonlinear interaction coefficient increases.
%Future work may therefore explore variance reduction techniques and compensation strategies for large interaction coefficients.
%Overall, SD-FSNN represents a step toward practical neural PDE solvers for high-dimensional problems on unbounded domains.

\section*{Acknowledgments}
The author thanks Xiaofei Zhao for carefully reading and revising the manuscript and for providing helpful suggestions. 
The author also thanks Tingfeng Wang for developing the code used in the numerical solution.

\bibliographystyle{siamplain}
\bibliography{references}

% \newpage

% \appendix

\end{document}

%% file: ex_shared.tex
\usepackage{lipsum}
\usepackage{amsfonts}
\usepackage{graphicx}
\usepackage{epstopdf}
\usepackage{algorithmic}
\usepackage{booktabs}
\usepackage{multirow} 
\usepackage{hyperref}
\usepackage{physics}
\usepackage{bbm}
\usepackage{bm}
\usepackage{tikz, tikz-cd}
\usetikzlibrary{positioning, arrows.meta} 

\usepackage[normalem]{ulem}
\ifpdf
  \DeclareGraphicsExtensions{.eps,.pdf,.png,.jpg}
\else
  \DeclareGraphicsExtensions{.eps}
\fi

\newsiamremark{remark}{Remark}
\newsiamremark{hypothesis}{Hypothesis}
\crefname{hypothesis}{Hypothesis}{Hypotheses}
\newsiamthm{claim}{Claim}

\headers{SD-FSNN for high-dim whole-space GPE}{Z. Liang}

\title{Stochastic-Dimension Frozen Sampled Neural Network for high-dimensional Gross-Pitaevskii equations on unbounded domains
\thanks{Submitted to the editors \today.
\funding{This work was supported by the National Natural Science Foundation of China (NO. 12202157).}}}

\author{Zhangyong Liang\thanks{National Center for Applied Mathematics, Tianjin University, Tianjin, 300072, China. (\email{zyliang1994@tju.edu.cn})}.}
\usepackage{amsopn}

\makeatletter
\newcommand*{\addFileDependency}[1]{% argument=file name and extension
  \typeout{(#1)}% latexmk will find this if $recorder=0 (however, in that case, it will ignore #1 if it is a .aux or .pdf file etc and it exists! if it doesn't exist, it will appear in the list of dependents regardless)
  \@addtofilelist{#1}% if you want it to appear in \listfiles, not really necessary and latexmk doesn't use this
  \IfFileExists{#1}{}{\typeout{No file #1.}}% latexmk will find this message if #1 doesn't exist (yet)
}
\makeatother

%% file: references.bib
@article{erdHos2010derivation,
  title={Derivation of the {G}ross-{P}itaevskii equation for the dynamics of {B}ose-{E}instein condensate},
  author={Erd{\H{o}}s, L{\'a}szl{\'o} and Schlein, Benjamin and Yau, Horng-Tzer},
  journal={Ann. Math.},
  pages={291--370},
  year={2010},
  publisher={JSTOR}
}

@article{GPEreview,
  title={Computational methods for the dynamics of the nonlinear {S}chr\"odinger/{G}ross–{P}itaevskii equations},
  author={Antoine, X and Bao, W and Besse, C},
  journal={Comput. Phys. Comm.},
  volume={184},
  pages={2621--2633},
  year={2013}
}

@article{DG,
  title={Local discontinuous galerkin methods for nonlinear Schr\"odinger equations},
  author={Xu, Y and Shu, C.W.},
  journal={J. Comput. Phys.},
  volume={205},
  pages={72--97},
  year={2005}
}

@book{lieb2005math,
  title={The mathematics of the {B}ose gas and its condensation},
  author={Lieb, Elliott H and Solovej, Jan Philip and Seiringer, Robert and Yngvason, Jakob},
  year={2005},
  publisher={Springer}
}

@inproceedings{gallicchio2020rann,
  title={Deep randomized neural networks},
  author={Gallicchio, Claudio and Scardapane, Simone},
  booktitle={Recent Trends in Learning From Data: Tutorials from the INNS Big Data and Deep Learning Conference (INNSBDDL2019)},
  pages={43--68},
  year={2020},
  organization={Springer}
}

@article{huybrechs2024rann,
  title={Sigmoid functions, multiscale resolution of singularities, and hp-mesh refinement},
  author={Huybrechs, Daan and Trefethen, Lloyd N},
  journal={SIAM Rev.},
  volume={66},
  number={4},
  pages={683--693},
  year={2024},
  publisher={SIAM}
}

@book{Cazenave,
  title={Semilinear {S}chr\"odinger Equations},
  author={Cazenave, T.},
  year={2003},
  publisher={Courant Lect. Notes Math., 10, Amer. Math. Soc.}
}

@article{georgescu2014quantum,
  author  = {I. M. Georgescu and S. Ashhab and Franco Nori},
  title   = {Quantum simulation},
  journal = {Rev. Modern Phys.},
  volume  = {86},
  number  = {1},
  pages   = {153--185},
  year    = {2014},
  doi     = {10.1103/RevModPhys.86.153}
}

@article{huang2012extreme,
  author  = {Guang-Bin Huang and Hongming Zhou and Xiaojian Ding and Rui Zhang},
  title   = {Extreme Learning Machine for Regression and Multiclass Classification},
  journal = {IEEE Transactions on Systems, Man, and Cybernetics, Part B (Cybernetics)},
  volume  = {42},
  number  = {2},
  pages   = {513--529},
  year    = {2012},
  doi     = {10.1109/TSMCB.2011.2168604}
}

@article{anglin2002bose,
  title={Bose--{E}instein condensation of atomic gases},
  author={Anglin, James R and Ketterle, Wolfgang},
  journal={Nature},
  volume={416},
  number={6877},
  pages={211--218},
  year={2002},
  publisher={Nature Publishing Group UK London}
}

@book{spectralbook,
  title={Spectral Methods: Algorithms, Analysis and Applications},
  author={Shen, J. Tang, T. and Wang, L.},
  year={2011},
  publisher={Springer, Berlin}
}

@article{fem,
  title={High-order mass-and energy-conserving {SAV}-{G}auss collocation finite element methods for the nonlinear {S}chr\"odinger equation},
  author={Feng, X and Li, B and Ma, S},
  journal={SIAM J. Numer. Anal.},
  volume={59},
  pages={1566--1591},
  year={1961}
}

@article{ShenFOCM,
  title={Error analysis of the {S}trang time-splitting {L}aguerre–{H}ermite/{H}ermite collocation methods for the {G}ross–{P}itaevskii equation},
  author={Shen, J and Wang, Z},
  journal={Found. Comput. Math.},
  volume={13},
  pages={99--137},
  year={2013}
}

@article{bao2005fourth,
  title={A fourth-order time-splitting {L}aguerre--{H}ermite pseudospectral method for {B}ose--{E}instein condensates},
  author={Bao, Weizhu and Shen, Jie},
  journal={SIAM J. Sci. Comput.},
  volume={26},
  number={6},
  pages={2010--2028},
  year={2005},
  publisher={SIAM}
}

@article{chou2023adaptive,
  title={Adaptive {H}ermite spectral methods in unbounded domains},
  author={Chou, Tom and Shao, Sihong and Xia, Mingtao},
  journal={Appl. Numer. Math.},
  volume={183},
  pages={201--220},
  year={2023},
  publisher={Elsevier}
}

@article{xu2006absorb,
  title={Absorbing boundary conditions for nonlinear {S}chr{\"o}dinger equations},
  author={Xu, Zhenli and Han, Houde},
  journal={Phys. Rev. E},
  volume={74},
  number={3},
  pages={037704},
  year={2006},
  publisher={American Physical Society}
}

@article{zhang2008unif,
  title={Unified approach to split absorbing boundary conditions for nonlinear {S}chr{\"o}dinger equations},
  author={Zhang, Jiwei and Xu, Zhenli and Wu, Xiaonan},
  journal={Phys. Rev. E},
  volume={78},
  number={2},
  pages={026709},
  year={2008},
  publisher={APS}
}

@article{PMLreview,
  title={A review of transparent and artificial boundary conditions techniques for linear and nonlinear {S}chr\"odinger equations},
  author={Antoine, Xavier and Arnold, A and Besse, C and Ehrhardt, M and Sch\"adle, A},
  journal={Commun. in Comput. Physics},
  volume={4},
  pages={729-796},
  year={2008}
}

@article{hu2024hte,
  title={Hutchinson trace estimation for high-dimensional and high-order physics-informed neural networks},
  author={Hu, Zheyuan and Shi, Zekun and Karniadakis, George Em and Kawaguchi, Kenji},
  journal={Comput. Methods Appl. Mech. Engrg.},
  volume={424},
  pages={116883},
  year={2024},
  publisher={Elsevier}
}

@article{wang2022tensor,
  title={Tensor neural network and its numerical integration},
  author={Wang, Yifan and Jin, Pengzhan and Xie, Hehu},
  journal={arXiv preprint arXiv:2207.02754},
  year={2022}
}

@article{cho2022spinn,
  title={Separable {PINN}: Mitigating the curse of dimensionality in physics-informed neural networks},
  author={Cho, Junwoo and Nam, Seungtae and Yang, Hyunmo and Yun, Seok-Bae and Hong, Youngjoon and Park, Eunbyung},
  journal={arXiv preprint arXiv:2211.08761},
  year={2022}
}

@article{han2018hjb,
  title={Solving high-dimensional partial differential equations using deep learning},
  author={Han, Jiequn and Jentzen, Arnulf and E, Weinan},
  journal={Proc. Natl. Acad. Sci. },
  volume={115},
  number={34},
  pages={8505--8510},
  year={2018},
  publisher={National Academy of Sciences}
}

@article{beck2019machine,
  title={Machine learning approximation algorithms for high-dimensional fully nonlinear partial differential equations and second-order backward stochastic differential equations},
  author={Beck, Christian and E, Weinan and Jentzen, Arnulf},
  journal={J. Nonlinear Sci.},
  volume={29},
  number={4},
  pages={1563--1619},
  year={2019},
  publisher={Springer}
}

@article{beck2021deep,
  title={Deep splitting method for parabolic {P}{D}{E}s},
  author={Beck, Christian and Becker, Sebastian and Cheridito, Patrick and Jentzen, Arnulf and Neufeld, Ariel},
  journal={SIAM J. Sci. Comput.},
  volume={43},
  number={5},
  pages={A3135--A3154},
  year={2021},
  publisher={SIAM}
}

@incollection{raissi2024forward,
  title={Forward--backward stochastic neural networks: deep learning of high-dimensional partial differential equations},
  author={Raissi, Maziar},
  booktitle={Peter Carr Gedenkschrift: Research Advances in Mathematical Finance},
  pages={637--655},
  year={2024},
  publisher={World Scientific}
}

@article{beck2024overcome,
  title={Overcoming the curse of dimensionality in the numerical approximation of high-dimensional semilinear elliptic partial differential equations},
  author={Beck, Christian and Gonon, Lukas and Jentzen, Arnulf},
  journal={Part. Differ. Equ. Appl.},
  volume={5},
  number={6},
  pages={31},
  year={2024},
  publisher={Springer}
}

@article{hu2024tackling,
  title={Tackling the curse of dimensionality with physics-informed neural networks},
  author={Hu, Zheyuan and Shukla, Khemraj and Karniadakis, George Em and Kawaguchi, Kenji},
  journal={Neural Netw.},
  volume={176},
  pages={106369},
  year={2024},
  publisher={Elsevier}
}

@article{chiu2022can,
  title={{C}{A}{N}-{P}{INN}: A fast physics-informed neural network based on coupled-automatic--numerical differentiation method},
  author={Chiu, Pao-Hsiung and Wong, Jian Cheng and Ooi, Chinchun and Dao, My Ha and Ong, Yew-Soon},
  journal={Comput. Methods Appl. Mech. Engrg.},
  volume={395},
  pages={114909},
  year={2022},
  publisher={Elsevier}
}

@article{lv2023hybrid,
  title={A hybrid physics-informed neural network for nonlinear partial differential equation},
  author={Lv, Chunyue and Wang, Lei and Xie, Chenming},
  journal={Internat. J. Modern Phys. C},
  volume={34},
  number={06},
  pages={2350082},
  year={2023},
  publisher={World Scientific}
}

@article{pang2019fpinns,
  title={f{PINNs}: Fractional physics-informed neural networks},
  author={Pang, Guofei and Lu, Lu and Karniadakis, George Em},
  journal={SIAM J. Sci. Comput.},
  volume={41},
  number={4},
  pages={A2603--A2626},
  year={2019},
  publisher={SIAM}
}

@article{sirignano2018dgm,
  title={D{GM}: A deep learning algorithm for solving partial differential equations},
  author={Sirignano, Justin and Spiliopoulos, Konstantinos},
  journal={J. Comput. Phys.},
  volume={375},
  pages={1339--1364},
  year={2018},
  publisher={Elsevier}
}

@article{hu2025smooth,
  title={Bias-variance trade-off in physics-informed neural networks with randomized smoothing for high-dimensional PDEs},
  author={Hu, Zheyuan and Yang, Zhouhao and Wang, Yezhen and Karniadakis, George E and Kawaguchi, Kenji},
  journal={SIAM J. Sci. Comput.},
  volume={47},
  number={4},
  pages={C846--C872},
  year={2025},
  publisher={SIAM}
}

@article{hu2025score,
  title={Score-based physics-informed neural networks for high-dimensional {F}okker--{P}lanck equations},
  author={Hu, Zheyuan and Zhang, Zhongqiang and Karniadakis, George E and Kawaguchi, Kenji},
  journal={SIAM J. Sci. Comput.},
  volume={47},
  number={3},
  pages={C680--C705},
  year={2025},
  publisher={SIAM}
}

@article{shi2024stde,
  title={Stochastic taylor derivative estimator: Efficient amortization for arbitrary differential operators},
  author={Shi, Zekun and Hu, Zheyuan and Lin, Min and Kawaguchi, Kenji},
  journal={Adv. Neural Inf. Process. Syst.},
  volume={37},
  pages={122316--122353},
  year={2024}
}

@article{bolager2023sampling,
  title={Sampling weights of deep neural networks},
  author={Bolager, Erik L and Burak, Iryna and Datar, Chinmay and Sun, Qing and Dietrich, Felix},
  journal={Adv. Neural Inf. Process. Syst.},
  volume={36},
  pages={63075--63116},
  year={2023}
}

@article{rahimi2007random,
  title={Random features for large-scale kernel machines},
  author={Rahimi, Ali and Recht, Benjamin},
  journal={Adv. Neural Inf. Process. Syst.},
  volume={20},
  year={2007}
}

@article{li2024forlap,
  title={A computational framework for neural network-based variational {M}onte {C}arlo with {F}orward {L}aplacian},
  author={Li, Ruichen and Ye, Haotian and Jiang, Du and Wen, Xuelan and Wang, Chuwei and Li, Zhe and Li, Xiang and He, Di and Chen, Ji and Ren, Weiluo and others},
  journal={Nat. Mach. Intell.},
  volume={6},
  number={2},
  pages={209--219},
  year={2024},
  publisher={Nature Publishing Group UK London}
}

@article{peherstorfer2022breaking,
  title={Breaking the {K}olmogorov barrier with nonlinear model reduction},
  author={Peherstorfer, Benjamin},
  journal={Not. Am. Math. Soc.},
  volume={69},
  number={5},
  pages={725--733},
  year={2022},
  publisher={American Mathematical Society}
}

@article{berman2024randomized,
  title={Randomized sparse neural {G}alerkin schemes for solving evolution equations with deep networks},
  author={Berman, Jules and Peherstorfer, Benjamin},
  journal={Adv. Neural Inf. Process. Syst.},
  volume={36},
  year={2024}
}

@article{krishnapriyan2021characterizing,
  title={Characterizing possible failure modes in physics-informed neural networks},
  author={Krishnapriyan, Aditi and Gholami, Amir and Zhe, Shandian and Kirby, Robert and Mahoney, Michael W},
  journal={Adv. Neural Inf. Process. Syst.},
  volume={34},
  pages={26548--26560},
  year={2021}
}

@article{wang2021understanding,
  title={Understanding and mitigating gradient flow pathologies in physics-informed neural networks},
  author={Wang, Sifan and Teng, Yujun and Perdikaris, Paris},
  journal={SIAM J. Sci. Comput.},
  volume={43},
  number={5},
  pages={A3055--A3081},
  year={2021},
  publisher={SIAM}
}

@article{wang2022and,
  title={When and why {PINNs} fail to train: A neural tangent kernel perspective},
  author={Wang, Sifan and Yu, Xinling and Perdikaris, Paris},
  journal={J. Comput. Phys.},
  volume={449},
  pages={110768},
  year={2022},
  publisher={Elsevier}
}

@article{kast2024positional,
  title={Positional embeddings for solving {PDE}s with evolutional deep neural networks},
  author={Kast, Mariella and Hesthaven, Jan S},
  journal={J. Comput. Phys.},
  volume={508},
  pages={112986},
  year={2024},
  publisher={Elsevier}
}

@article{ainsworth2021gnn,
  title={Galerkin neural networks: A framework for approximating variational equations with error control},
  author={Ainsworth, Mark and Dong, Justin},
  journal={SIAM J. Sci. Comput.},
  volume={43},
  number={4},
  pages={A2474--A2501},
  year={2021},
  publisher={SIAM}
}

@article{ainsworth2025egnn,
  title={Extended {G}alerkin neural network approximation of singular variational problems with error control},
  author={Ainsworth, Mark and Dong, Justin},
  journal={SIAM J. Sci. Comput.},
  volume={47},
  number={3},
  pages={C738--C768},
  year={2025},
  publisher={SIAM}
}

@article{wang2025tensor,
  title={Tensor neural networks for high-dimensional {F}okker--{P}lanck equations},
  author={Wang, Taorui and Hu, Zheyuan and Kawaguchi, Kenji and Zhang, Zhongqiang and Karniadakis, George Em},
  journal={Neural Netw.},
  volume={185},
  pages={107165},
  year={2025},
  publisher={Elsevier}
}

@article{nesterov2012efficiency,
  title={Efficiency of coordinate descent methods on huge-scale optimization problems},
  author={Nesterov, Yu},
  journal={SIAM J. Optim.},
  volume={22},
  number={2},
  pages={341--362},
  year={2012},
  publisher={SIAM}
}

@article{Yu,
  title={Scaling optimized {H}ermite approximation methods},
  author={Hu, H. and Yu, H.},
  journal={SIAM J. Numer. Anal.},
  volume={64},
  pages={125--147},
  year={2026},
  publisher={SIAM}
}
